\documentclass{article}


\usepackage{microtype}
\usepackage{graphicx}
\usepackage{subfigure}
\usepackage{booktabs} 
\usepackage{comment}

\usepackage{multicol}
\usepackage{multirow}
\usepackage{mdframed}

\usepackage{hyperref}


\usepackage{amsmath}
\usepackage{amssymb}
\usepackage{mathtools}
\usepackage{amsthm}

\usepackage{algorithm}
\usepackage{algorithmic}

\theoremstyle{plain}
\newtheorem{theorem}{Theorem}[section]

\theoremstyle{definition}
\newtheorem{definition}[theorem]{Definition}

\theoremstyle{remark}
\newtheorem{remark}[theorem]{Remark}

\usepackage[textsize=tiny]{todonotes}

\definecolor{lm_purple_low}{RGB}{240,240,248}
\definecolor{lm_purple}{RGB}{227,227,240}

\definecolor{mygray}{gray}{0.95}


\usepackage{blindtext}
\usepackage{titlesec}

\usepackage[preprint]{neurips_2025}



\usepackage[utf8]{inputenc} 
\usepackage[T1]{fontenc}    
\usepackage{hyperref}       
\usepackage{url}            
\usepackage{booktabs}       
\usepackage{amsfonts}       
\usepackage{nicefrac}       
\usepackage{microtype}      
\usepackage{xcolor}         


\title{Fine-tuning Flow Matching Generative Models with Intermediate Feedback}


%

\author{%
  Jiajun Fan$^1$, Chaoran Cheng$^1$, Shuaike Shen$^{2}$,  Xiangxin Zhou$^3$, Ge Liu$^1$ \\
$^1$ University of Illinois Urbana-Champaign, $^2$ Carnegie Mellon University.\\
$^3$ University of Chinese Academy of Sciences. \\
$^1$ \texttt{\{jiajunf3, chaoran7, geliu\}@illinois.edu} \\
$^{2,3}$ \texttt{shuaikes@andrew.cmu.edu, zhouxiangxin1998@gmail.com} \\
}

\begin{document}

\maketitle

\begin{abstract}
Flow-based generative models have shown remarkable success in text-to-image generation, yet fine-tuning them with intermediate feedback remains challenging, especially for continuous-time flow matching models. Most existing approaches solely learn from outcome rewards, struggling with the credit assignment problem. Alternative methods that attempt to learn a critic via direct regression on cumulative rewards often face training instabilities and model collapse in online settings. We present AC-Flow, a robust actor-critic framework that addresses these challenges through three key innovations: (1) reward shaping that provides well-normalized learning signals to enable stable intermediate value learning and gradient control, (2) a novel dual-stability mechanism that combines advantage clipping to prevent destructive policy updates with a warm-up phase that allows the critic to mature before influencing the actor, and (3) a scalable generalized critic weighting scheme that extends traditional reward-weighted methods while preserving model diversity through  Wasserstein regularization. Through extensive experiments on Stable Diffusion 3, we demonstrate that AC-Flow achieves state-of-the-art performance in text-to-image alignment tasks and generalization to unseen human preference models. Our results demonstrate that even with a computationally efficient critic model, we can robustly finetune flow models without compromising generative quality, diversity, or stability.
\end{abstract}

\section{Introduction}
\label{sec: introduction}

Large-scale generative models have revolutionized artificial intelligence, with flow matching (FM) emerging as a particularly promising framework that has achieved state-of-the-art (SOTA) performance in generating high-fidelity images \citep{sd3} and complex molecular structures like proteins \citep{se3}. However, how to effectively incorporate process feedback to improve these models remains challenging. Most current approaches predominantly rely on outcome rewards \citep{guo2025deepseek_r1,ddpo,iclr_rwr}, which suffer from credit assignment problem: when the same reward signal is applied uniformly across all intermediate states of the generative trajectory, the model receives no intermediate feedback about which specific steps in the generation process were instrumental in achieving high-quality results \citep{anonymous2025derivativefree}. The ability to accurately evaluate and optimize intermediate states is thus critical for precise control over the generative process \citep{ver_step_by_step,instructed_rlhf}, yet remains a significant challenge for fine-tuning flow matching models, where the continuous-time ODE-based dynamics and high-dimensional state space (e.g., image) make intermediate state value estimation particularly difficult.

\begin{figure*}[!ht]
	\centering
 \includegraphics[width=\linewidth]{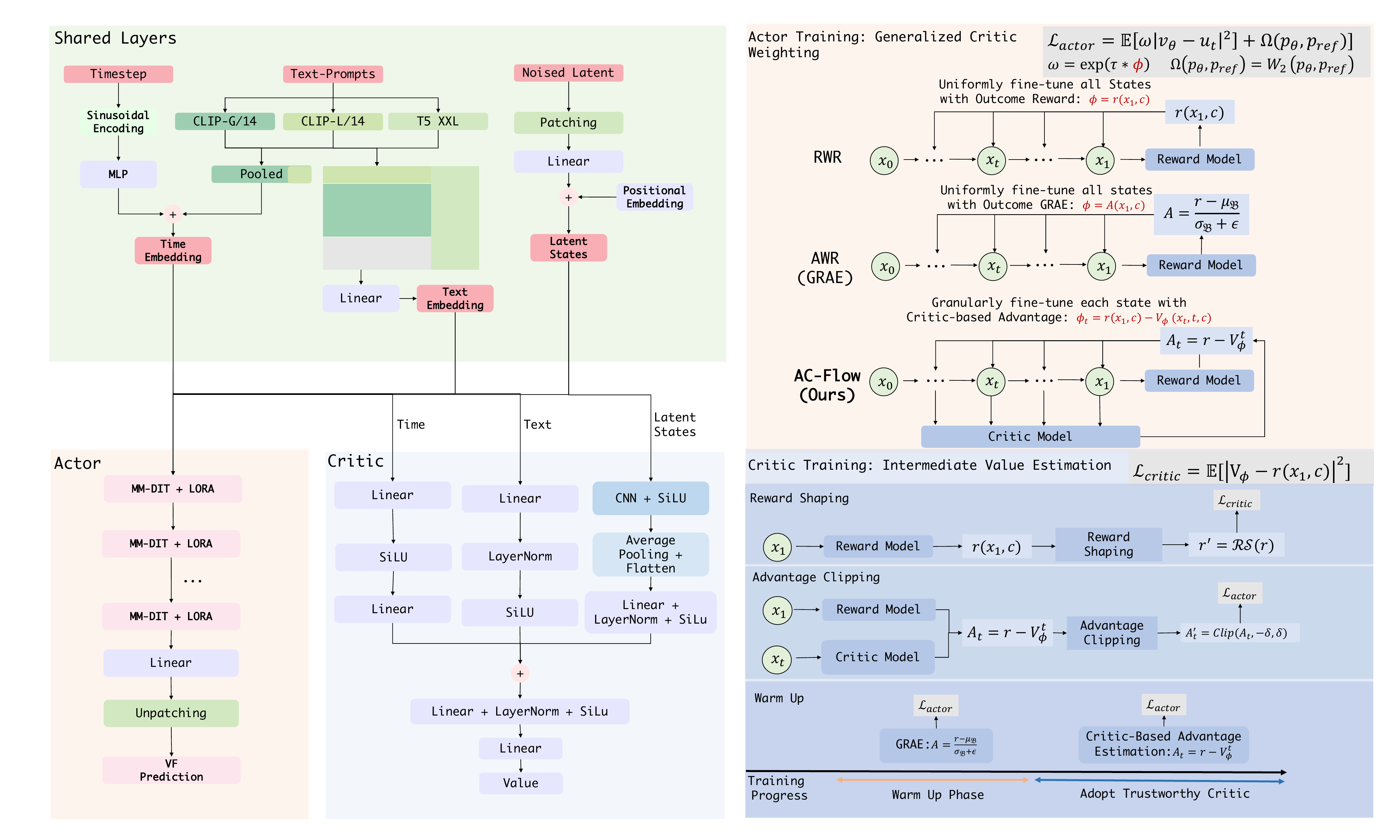}
	\caption{General Framework of Our AC-Flow including our Model Architecture (Left) and Training Techniques (Right). We also include pipeline of outcome-driven methods like RWR \citep{iclr_rwr} and AWR \citep{awr} in actor part (Right), to highlight the differences and our unique contribution.}
    \label{fig: general framework}
\end{figure*}

To obtain process feedback, recent works have shown promising results by learning critics through direct value regression in both offline \citep{anonymous2025derivativefree} and diffusion \citep{xiangxin_stable} settings. However, extending these approaches to online fine-tuning of flow matching models with self-generated data presents fundamental challenges. First, vanilla actor-critic methods, when naively applying value regression in online settings, can trigger catastrophic instabilities and create destructive feedback loops \citep{offline_to_online} where both actor and critic losses escalate uncontrollably (see Fig. \ref{fig: exp stable critic training}). Second, training with self-generated data frequently leads to model collapse, causing catastrophic loss of generative diversity \citep{model_collapse,iclr_rwr}. Third, the computationally intractable generative likelihood in FM and the ill-defined evidence lower bound (ELBO) \citep{fmgm,iclr_rwr} prevent the direct application of widely-adopted policy gradient methods \citep{ddpo,xiangxin_stable,gae} in ODE-based FM models. These challenges are particularly acute in large-scale text-to-image generation, where high-dimensional state spaces and sparse rewards make stable and efficient online actor-critic training exceptionally difficult without proper stabilization techniques \citep{offline_to_online} (See Fig. \ref{fig: exp stable critic training}).

To tackle these challenges, we present AC-Flow, an online actor-critic framework for fine-tuning FM models that achieves stable improvement from intermediate feedback (e.g., advantages for each intermediate state as Fig. \ref{fig: general framework}). Our approach introduces several synergistic innovations: (1) a computationally efficient critic architecture combined with reward shaping and stabilization techniques that enables robust intermediate state value estimation without excessive computational overhead; (2) a generalized critic weighting scheme that unifies and extends traditional reward-weighted methods  to leverage intermediate state values while maintaining low variance through advantage estimation, inspired by GAE \citep{gae}; (3) a dual-stability mechanism that combines advantage clipping with warm-up strategies to prevent destructive policy updates while allowing the critic to mature; and (4) an effective Wasserstein-2 regularization scheme that maintains generative diversity while enabling stable learning from self-generated data.  In general, our primary contributions can be summarized as follows:

\begin{enumerate}
\item \textbf{Stable Intermediate State Value Evaluation.} As Fig. \ref{fig: general framework}, we propose a computationally efficient critic architecture that successfully learns the value estimation of intermediate states for continuous-time flow matching by combining simple value regression \citep{xiangxin_stable} with novel stabilization techniques. Our approach uses reward shaping \citep{shape_rewards} to normalize reward scales and enable robust critic convergence while avoiding training collapse (See Fig. \ref{fig: exp stable critic training}).

\item \textbf{Generalized Critic Weighting.} To handle the computational intractability of policy gradient in ODE-based flow matching, we develop a unified framework that extends traditional reward-weighted methods through generalized critic weighting (GCW). Unlike existing outcome-driven approaches \citep{ddpo,Raft}, our method leverages intermediate state values for granular fine-tuning while preserving generative diversity through  Wasserstein regularization.

\item \textbf{Robust Actor-Critic Framework.} We introduce a comprehensive stability mechanism that combines reward shaping, advantage clipping, and warm-up strategies to achieve online actor-critic fine-tuning without training collapse or reward hacking (Fig. \ref{fig: exp stable critic training} ,\ref{fig: main com complex semantic alignment}). This approach ensures gradual incorporation of critic feedback into policy optimization while maintaining computational efficiency, making it readily applicable to  large-scale models.

\item \textbf{Empirical Validation.} Through extensive experiments with Stable Diffusion 3 \citep[SD3]{sd3}, we demonstrate state-of-the-art performance in text-to-image generation, achieving superior semantics alignment \citep{clip,image_reward}, and generalization to unseen human preference models \citep{hpsv2} while maintaining generative diversity (See Tab. \ref{tab:main_results}, Figs. \ref{fig: main com complex semantic alignment}).
\end{enumerate}

\section{Related Work}
\label{sec: related work}

\textbf{Fine-tuning from Human Feedback.} Current approaches to reinforcement learning from human feedback (RLHF) typically fall into two categories: reward-based methods \citep{adjoint_matching,diff_rewards} and direct preference optimization \citep[DPO]{dpo}. While DPO enables direct model fine-tuning using filtered datasets, it cannot incorporate reward models. Additionally, existing reward-based approaches like Adjoint-Matching \citep{adjoint_matching} and DRaFT \citep{diff_rewards} require differentiable rewards, limiting their applicability to practical cases where reward gradients is unable to obtain such as rule-based or rank-based reward \citep{grpo}. Despite the success of process-based RLHF in language models \citep{ver_step_by_step,zhang2024restmcts}, extending these benefits to online fine-tuning of FM models while supporting arbitrary reward functions remains an open challenge.

\textbf{Fine-tuning Flow Matching Generative Models.} While recent work has advanced fine-tuning methods for diffusion models through both policy gradient approaches \citep{xiangxin_stable,ddpo,dpok} and reward-based techniques \citep{Raft,iclr_rwr}, these methods face fundamental limitations when applied to fine-tune FM models with process feedback. Policy gradient methods like DDPO \citep{ddpo}, despite their success in diffusion models, struggle with flow matching due to the computational intractability of estimating transition probabilities \citep{fmgm,iclr_rwr}. Besides, reward-based approaches like RAFT \citep{Raft} and RWR \citep{iclr_rwr}, which rely solely on outcome rewards, can provide misleading guidance for intermediate state optimization \citep{ver_step_by_step}.

\textbf{Value Estimation in Fine-tuning Generative Models.} A direct approach to achieving RLHF with process feedback is to employ actor-critic architectures, similar to PPO \citep{instructed_rlhf}, using generalized advantage estimation \citep[GAE]{gae}. Recent work has demonstrated significant progress in learning stable critic models for both diffusion models in offline settings \citep{anonymous2025derivativefree} and stochastic differential equations \citep{xiangxin_stable}. However, the challenge of achieving stable online actor-critic fine-tuning for large-scale continuous-time ODE-based flow matching models, such as SD3, remains unsolved. This is particularly challenging due to the inherent instabilities in online value learning \citep{offline_to_online} and the potential for catastrophic forgetting in online learning scenarios \citep{forget}.

\section{Methodology}
\label{sec: method}

\subsection{Problem Setup}
We formalize the reinforcement learning (RL) fine-tuning problem for continuous-time flow matching generative models as follows (See App. \ref{app: background} for more backgrounds of RL and FM). Let $\theta \in \mathbb{R}^m$ parameterize a conditional flow matching generative model  $p_\theta(x_1 \mid c)$, where $x_1 \in \mathcal{X} \subseteq \mathbb{R}^d$ denotes the generated sample (e.g., an image) and $c \in \mathcal{C}$ is the conditioning context (e.g., a text prompt). The model defines a continuous-time trajectory $\left\{x_t\right\}_{t \in[0,1]}$, governed by the ODE $\frac{d x_t}{d t}=v_\theta\left(t, x_t, c\right), \quad x_0 \sim p_0\left(x_0\right)$, where $v_\theta(t, x_t, c)$ is the learned vector field transporting an initial distribution $p_0\left(x_0\right)$ to the target distribution $p_\theta\left(x_1 \mid c\right)$.   The conditional flow matching \citep[CFM]{fmgm}  trains $v_\theta$ to align with a reference vector field $u_t\left(x_t \mid x_1, c\right)$, which defines a \textit{conditional probability path} $p\left(x_t \mid x_1, c\right)$ for individual target samples $x_1 \sim q\left(x_1 \mid c\right)$. CFM objective minimizes:

\begin{equation}
    \mathcal{L}_{\mathrm{CFM}}(\theta)=\mathbb{E}_{t \sim \mathcal{U}[0,1], c \sim p(c), x_1 \sim q\left(x_1 \mid c\right), x_t \sim p\left(x_t \mid x_1, c\right)}\left\|v_\theta(t, x_t, c)-u_t\left(x_t \mid x_1, c\right)\right\|^2,
\end{equation}

where, $p(c)$ is the sampling distribution of conditioning context, and $u_t\left(x_t \mid x_1, c\right)$ ensures $p\left(x_t \mid x_1, c\right)$ evolves smoothly from $p_0\left(x_0\right)$ to $q\left(x_1 \mid c\right)$ \citep{otcfm}.

\textbf{Generative Process.} The learned $v_\theta$ induces a deterministic push-forward map $\Phi_{0 \rightarrow 1}^c$, transforming $p_0\left(x_0\right)$ into the target distribution  $q\left(x_1 \mid c\right)=\Phi_{0 \rightarrow 1}^c \# p_0\left(x_0\right)$, where $\#$ denotes the push-forward operation. To sample from $p_\theta(x_1 \mid c)$, we solve ODE: $x_1=x_0+\int_0^1 v_\theta\left(t, x_t, c\right) d t$.

\textbf{RL Objective.} Given a reward function $r: \mathcal{X} \times \mathcal{C} \rightarrow \mathbb{R}$ that evaluates the quality of generated samples, our goal is to adapt $\theta$ such that $p_\theta(x_1 \mid c)$ maximizes the expected reward:

\begin{equation}
\max _\theta \mathbb{E}_{x_1 \sim p_\theta(x_1 \mid c), c \sim p(c)}[r(x_1, c)].
\end{equation}




\subsection{Fine-tuning with Outcome  Reward}

Prior work in fine-tuning large generative models has largely focused on outcome-driven optimization \citep{ddpo,dpok,Raft,ReFT,iclr_rwr}, where policy updates depend exclusively on scalar rewards assigned to final states $x_1$, leading to \colorbox{green!20}{uniform credit assignment} (i.e., $\omega(x_{t\in[0,1]},c)=\omega(x_1,c)$) for each state in the generative trajectory. These approaches treat the generative trajectory $\left\{x_t\right\}_{t \in[0,1]}$ as a monolithic entity, optimizing the entire path based on the terminal reward $r\left(x_1, c\right)$. For instance, online reward weighting methods \citep{iclr_rwr} assign trajectory weights proportional to the exponential of the final reward as $w\left(x_1, c\right) \propto \exp \left(\tau r\left(x_1, c\right)\right)$, where $\tau >0$. For each round, the policy update minimizes a weighted alignment objective between the learned vector field $v_\theta(t, x_t, c)$ and the reference flow $u_t\left(x_t \mid x_1, c\right)$:

\begin{equation}
\label{equ: rwr loss}
\mathbb{E}_{t \sim \mathcal{U}[0,1], c \sim p(c), x_1 \sim p_\theta^{(n-1)}(x_1 \mid c), x \sim p\left(x_t \mid x_1, c\right)}\left[\colorbox{green!20}{$w\left(x_1, c\right)$}\left\|v_\theta(t, x_t, c)-u_t\left(x_t \mid x_1, c\right)\right\|^2\right] ,
\end{equation}

where $x_1$ is sampled by the learned policy at round $n-1$, $p_\theta^{(0)}(x_1 \mid c)=p_{ref}(x_1 \mid c)$. When training converges for each round, this induces a multiplicative shift in the learned policy distribution \citep{awr,iclr_rwr}:

\begin{theorem}[Policy Update in Online Reward-Weighting]
\label{Theorem: Policy Update in Online Reward Weighting}
    Under ideal conditions, if at round $n$ the parameters $\theta$ perfectly minimize online reward-weighting loss as Equ. \eqref{equ: rwr loss}, then the learned policy distribution satisfies:
\begin{equation}
    p_\theta^{(n)}(x_1 \mid c) \propto w(x_1, c) p_\theta^{(n-1)}(x_1 \mid c).
\end{equation}
\end{theorem}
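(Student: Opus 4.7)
The plan is to reduce the weighted CFM objective in Eq. \eqref{equ: rwr loss} to a standard (unweighted) CFM objective against a reweighted target distribution, and then invoke the marginal-vector-field characterization of CFM minimizers.

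First I would isolate the inner expectation over $x_1$ for fixed $c$ and absorb the weight into the sampling measure. Define the normalization $Z(c) = \int w(x_1,c)\, p_\theta^{(n-1)}(x_1\mid c)\, dx_1$ (finite since $w>0$ and bounded) and the tilted distribution $\tilde q(x_1\mid c) \;=\; \frac{w(x_1,c)\, p_\theta^{(n-1)}(x_1\mid c)}{Z(c)}$. A simple change of measure rewrites the loss in Eq. \eqref{equ: rwr loss} as
\begin{equation*}
\mathbb{E}_{c\sim p(c)}\!\left[Z(c)\cdot \mathbb{E}_{t\sim \mathcal{U}[0,1],\, x_1\sim \tilde q(x_1\mid c),\, x_t\sim p(x_t\mid x_1,c)}\!\left\|v_\theta(t,x_t,c)-u_t(x_t\mid x_1,c)\right\|^2\right].
\end{equation*}
Because $Z(c)$ and $p(c)$ do not depend on $\theta$, the minimizer of the weighted loss coincides with the minimizer of the ordinary CFM objective $\mathcal{L}_{\mathrm{CFM}}$ in which the target conditional distribution has been replaced by $\tilde q(x_1\mid c)$ in place of $q(x_1\mid c)$.

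Next I would invoke the standard CFM equivalence (\citealp{fmgm}; recalled in App.~\ref{app: background}): under mild regularity, the pointwise minimizer of the conditional matching loss with target $\tilde q$ is the \emph{marginal} vector field $\tilde u_t(x_t\mid c) = \mathbb{E}\!\left[u_t(x_t\mid X_1,c)\mid X_t=x_t,c\right]$, where the expectation is taken under $X_1\sim \tilde q(\cdot\mid c)$ and $X_t\sim p(\cdot\mid X_1,c)$. This marginal field generates a probability path whose time-$1$ marginal is exactly $\tilde q(x_1\mid c)$. Hence, by hypothesis that $\theta$ perfectly minimizes the loss, the push-forward $\Phi_{0\to 1}^{c}\# p_0$ induced by $v_\theta$ equals $\tilde q(x_1\mid c)$; that is, $p_\theta^{(n)}(x_1\mid c) = \tilde q(x_1\mid c) \;\propto\; w(x_1,c)\, p_\theta^{(n-1)}(x_1\mid c)$, which is the claim.

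The only genuinely nontrivial step is the third one — appealing to the CFM marginalization identity that turns the conditional matching loss into a regression against the marginal vector field and guarantees the endpoint marginal of the induced ODE is the target distribution. Everything else (the change of measure and discarding $\theta$-independent factors) is routine. A second, minor, caveat is ensuring $Z(c)<\infty$ and that $\tilde q$ admits a conditional probability path $p(x_t\mid x_1,c)$ in the same family used to define $u_t$; both hold by construction because $w$ is a bounded positive reweighting and the conditional path $p(x_t\mid x_1,c)$ is defined per sample $x_1$ independently of which distribution generates $x_1$.
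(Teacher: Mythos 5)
Your proof is correct and follows the route the paper intends: the paper gives no detailed argument for this theorem in the main text, and its appendix version is justified only by the remark that it ``follows from exponential tilting of the reward-weighted distribution'' (citing prior RWR/AWR work) --- your change of measure to the tilted target $\tilde q(x_1\mid c)\propto w(x_1,c)\,p_\theta^{(n-1)}(x_1\mid c)$ followed by the CFM marginal-vector-field theorem is exactly the fleshed-out version of that tilting argument. The only step worth stating explicitly is that the conditional path $p(x_t\mid x_1,c)$ must concentrate on $x_1$ at $t=1$ so that the marginal generated by the optimal vector field has endpoint $\tilde q$, which holds for the linear/OT paths the paper uses.
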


\begin{remark}
    \textit{While these guarantees hold in theory, they rely on a critical assumption: all intermediate states $x_t$ in a trajectory contribute equally to the final reward.}
\end{remark}

\textbf{Limitation of Outcome-Driven Methods.} In practice, outcome-driven methods suffer from process reward ambiguity, namely the inability to attribute feedback to specific steps in the generation process. This manifests as two interrelated challenges: (1) sparse and noisy learning signals, where flawed intermediate states (e.g., misaligned denoising at $t=0.3$ ) irreversibly degrade $x_1$, yet uniform weighting $w\left(x_1, c\right)$ fails to isolate their impact, causing early errors to propagate uncorrected while beneficial steps lack targeted reinforcement; and (2) catastrophic entropy collapse, where in online settings, noisy or sparse final rewards \textit{amplify gradient variance}, leading to poorly timed updates (driven by overestimated or underestimated feedbacks) that destabilize training and collapse policies into low-diversity modes \citep{model_collapse}. For instance, in text-to-image synthesis with SD3, failing to position ``a red apple left of a green backpack'' at $t=0.5$ may yield an incoherent image, but \textit{outcome-driven methods uniformly penalizes all steps, leaving the model unable to discern which phase caused the error.} Therefore, we need to estimate the value of intermediate states, and achieve fine-tuning from more granular feedback for each generative steps with online RL.

\subsection{Critic: Value Estimation of Intermediate State}

A critical challenge in online fine-tuning is to properly evaluate the 
\emph{intermediate} states $\left(x_t, c\right)$ visited by the flow matching model. Unlike reward weighting method-where only a final state reward $r\left(x_T, c\right)$ is observed-here we seek to learn a 
\emph{value function} $V_\phi$ that approximates the expected sum of rewards from any partial state $\left(x_t, c\right)$ forward to completion. This provides more granular feedback for policy improvements and alleviates the sparse-reward problem inherent in purely outcome-driven methods.

\textbf{Value Estimation.} Let $\left\{x_s\right\}_{s=t}^T$ be a trajectory generated by the flow model from time $t$ through the terminal time $T=1$.  We define the \emph{return} for the partial trajectory starting at $\left(x_t, c\right)$ by summing all future rewards (i.e., the cumulative reward): $G_t = \Sigma r=r(x_T,c)$. Our goal is to learn a function $V_\phi(x, t, c)$ that regresses onto the expectation of $G_t$. Concretely, we minimize the mean-squared error (MSE):

\begin{equation}
    \mathcal{L}_{\text {value }}(\phi)=\mathbb{E}\left[\left(V_\phi\left(x_t, t, c\right)-G_t\right)^2\right].
\end{equation}

The critic loss can be estimated as  $\frac{1}{n} \sum_{j=1}^n\left(V_\phi\left(x_t^j, t, c\right)-r\left(x_1^j, c\right)\right)^2$, where $x_t^j$ is sampled from $j$th trajectory at $t$. In practice, we will sample $\left(x_t, c\right)$ at various $t$ for stable learning \citep{sd3}. This regression view is analogous to standard value-function learning in reinforcement learning, except that our ``state'' is the continuous-time flow location $(x_t,t,c)$ and the future spans $s \in[t, 1]$. By aligning $V_\phi\left(x_t, t, c\right)$ with the sum of subsequent rewards, the model gains a local measure of how promising an intermediate state is. This local measure is especially critical for fine-tuning continuous flows, as it helps the policy discern which partial states lead to high-quality final outputs. We now can adopt $A=r(x_1,c)-V(x_t,t,c)$ instead of outcome-reward $r(x_1,c)$ to estimate the intermediate weight for policy learning in Equ. \eqref{equ: rwr loss}.

\textbf{Reward Shaping.} We adopt reward shaping (e.g., reward scaling \citep{r2d2}) to stabilize and accelerate critic learning by normalizing rewards to mitigate scale-related training instability. For a batch of rewards $\left\{r_i\right\}_{i=1}^N$, we apply min-max scaling \citep{min_max_normal} as:

\begin{equation}
    \mathcal{R S}\left(r_i\right)=\frac{r_i-\min \left(\left\{r_i\right\}\right)}{\max \left(\left\{r_i\right\}\right)-\min \left(\left\{r_i\right\}\right)+\epsilon},
\end{equation}

where $\epsilon>0$ prevents division by zero. For CLIP score-based rewards \citep{clip}, this ensures $r_i^{\prime} \in$ $[0,1]$, reducing variance in policy updates.

\textbf{Advantage Clipping and Critic Warm-Up.} To stabilize training against overestimated or high-variance value estimates, we integrate a dual-stability mechanism: 1) the advantage clipping mitigates aggressive policy updates by truncating extreme advantage values. For advantage estimates $A_t=r-V_\phi$, we compute $A_t^{\text {clip }}=\operatorname{clip}\left(A_t,-\delta, \delta\right)$, where $\delta>0$ (e.g., $\delta=5.0$ ) thresholds gradients, curtailing outliers that could destabilize the actor.
2) the critic warm-up addresses initial critic inaccuracy by deferring its use in policy updates. During the first $k$ steps (e.g., $k=500$ ), we train $V_\phi$ exclusively while estimating advantages via group-relative advantages estimation (GRAE) over a batch $\mathcal{B}: A_t^{\text {group }}=\frac{r-\mu_{\mathcal{B}}}{\sigma_{\mathcal{B}}+\epsilon}$, following GRPO \citep{grpo} of DeepSeek-R1 \citep{guo2025deepseek_r1}. This decouples early-stage actor learning from an underfitted critic. After Critic warm-up ( $t \geq k$ ), we transition to critic-guided advantages $A_t=r-V_\phi$, ensuring stable policy updates as $V_\phi$ matures.

\subsection{Training Actor via Generalized Critic Weighting}
\label{sec: Generalized Critic Weighting}

While online reward weighting provides a simple but effective mechanism to guide flow matching using only outcome rewards, it treats all intermediate states in the same trajectory identically. This \emph{credit assignment} challenge motivates introducing a \emph{critic} model to more precisely estimate how promising each intermediate state and action is. Inspired by generalized advantage estimation \citep[GAE]{gae}, we derive a family of \emph{generalized critic weighting} methods that leverage different critic forms (e.g., value function, advantage function, $Q$-function) to refine how each intermediate state in the generative trajectory is weighted.

\textbf{Generalized Critic Weighting.} We now adapt these ideas to flow matching fine-tuning, allowing the weighting function $w_\phi\left(x_t, t, c\right)$ to depend on a learned critic $\phi$ as: $ w_\phi\left(x_t, t, c\right)=\exp \left(\tau \phi\left(x_t, t, c\right)\right)$. Concretely, consider a \emph{critic} $\phi\left(x_t, t, c\right)$ that approximates any of the following quantities for the partial trajectory starting at $\left(x_t, t, c\right)$ :

\begin{equation*}
    \phi\left(x_t, t, c\right) \approx \begin{cases}\text { Outcome Reward: } &  r\left(x_1, c\right), \\ \text { GRAE: } & \frac{r-\mu_{\mathcal{B}}}{\sigma_{\mathcal{B}}+\epsilon},\\  \text { Advantage function: } & A^\pi\left(x_t, a_t, t, c\right), \\ 
    \end{cases}
\end{equation*}

Each such choice provides a different lens on \emph{how} valuable the state is or how much \emph{improvement} it can yield. Introducing \colorbox{blue!20}{intermediate state-wise credit assignment} $w_\phi(x_t,t,c)$ to replace the outcome-driven weights in Equ. \eqref{equ: rwr loss} yields the generalized critic weighting loss $\mathcal{L}_{\text{online-gcw}}(\theta)$:

\begin{equation}
\label{equ: gcw loss}
    \mathbb{E}_{t \sim \mathcal{U}[0,1], c \sim p(c), x_1 \sim p_\theta^{(n-1)}(x_1 \mid c), x_t \sim p_t\left(x_t \mid x_1, c\right)}\left[\colorbox{blue!20}{$w_\phi\left(x_t,t, c\right)$}\left\|v_\theta\left(x_t,t, c\right)-u_t\left(x_t \mid x_1, c\right)\right\|^2\right].
\end{equation}

Based on that, each intermediate state $x_t$ is re-weighted according to $\phi$. If $\phi$ represents a cumulative reward (or value), high-return states are amplified; if $\phi$ captures advantage, states that outperform their baseline get more weight than under-performing ones. In this paper,  we adopt $\phi = A  = r(x_1,c) - V(x_t,t,c)$ as our advantage estimation for generalized critic weighting for stable learning and low gradient variance, akin to RL policy gradient methods  \citep{gae}, while value-based critics provide stable baselines for long-term credit propagation.

\subsection{Fine-tuning Flow Matching Models with Wasserstein Regularization}

Similar to many previous weighting/rank based method \citep{Raft,iclr_rwr}, directly apply online generalized critic weighting may cause the entropy collapse problem. Specifically, while online generalized critic weighting  steers flow models toward high-reward regions, repeated re-weighting risks entropy collapse-degeneration into a near-deterministic distribution concentrated on a single reward-maximizing mode \citep{ddpo}. 

To balance reward pursuit with generative diversity, we penalize deviations from a pre-trained reference model $p^{\text {ref. }}$

\begin{equation}
    \mathcal{L}_{\text {online-reg }}(\theta)=\mathcal{L}_{\text {online-gcw }}(\theta)+\alpha \cdot \Omega\left(p_\theta^{(n)}, p^{\mathrm{ref}}\right),
\end{equation}

where $\Omega$ quantifies the distance between $p_\theta^{(n)}$ and $p_\theta^{\text {ref }}$, and $\alpha$ governs the exploration-exploitation trade-off. Traditional KL divergence is intractable for continuous-time flow matching. Some related work \citep{iclr_rwr,flow_q} have tried to use the Wasserstein-2  distance, which is more computation efficient. Direct $W_2$ computation is prohibitive, but a bound linking vector field differences to $W_2$ has been found in \citep{iclr_rwr,flow_q}:

\begin{theorem}[Bound on $W_2$ for Flow Matching]
   Let $v^{\theta_1}, v^{\theta_2}$ be vector fields for flow models $\theta_1, \theta_2$, with $v^{\theta_2}$ L-Lipschitz in $x$. Then:
\begin{equation}
    W_2^2\left(p_1^{\theta_1}, p_1^{\theta_2}\right) \leq e^{2 L} \int_0^1 \mathbb{E}_{x \sim p^{\theta_1},c \sim p(c)}\left[\left\|v^{\theta_1}-v^{\theta_2}\right\|^2\right] d s
\end{equation}
\end{theorem}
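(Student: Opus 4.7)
The plan is to apply the classical synchronous coupling argument for ODEs together with a Gronwall-type estimate, closing with Cauchy-Schwarz in time. First I would build an explicit coupling of $p_1^{\theta_1}$ and $p_1^{\theta_2}$ by drawing a common initial state $x_0 \sim p_0$ and a common context $c \sim p(c)$, then running both ODEs $\dot X_t^i = v^{\theta_i}(t, X_t^i, c)$ from the shared starting point $X_0^1 = X_0^2 = x_0$. Since the joint law of $(X_1^1, X_1^2)$ is by construction a coupling of the two terminal distributions,
\begin{equation*}
W_2^2(p_1^{\theta_1}, p_1^{\theta_2}) \le \mathbb{E}\,\|X_1^1 - X_1^2\|^2,
\end{equation*}
so it suffices to control the path-wise squared displacement.

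Next I would set $\Delta_t := X_t^1 - X_t^2$ and write
\begin{equation*}
\dot\Delta_t = \bigl(v^{\theta_1}(t,X_t^1,c) - v^{\theta_2}(t,X_t^1,c)\bigr) + \bigl(v^{\theta_2}(t,X_t^1,c) - v^{\theta_2}(t,X_t^2,c)\bigr).
\end{equation*}
The second bracket is bounded by $L\|\Delta_t\|$ by the Lipschitz hypothesis on $v^{\theta_2}$, so combining with the elementary bound $\tfrac{d}{dt}\|\Delta_t\| \le \|\dot\Delta_t\|$ (or, more carefully, working with $\tfrac{d}{dt}\|\Delta_t\|^2 = 2\langle \Delta_t,\dot\Delta_t\rangle$) yields the scalar differential inequality $\tfrac{d}{dt}\|\Delta_t\| \le L\|\Delta_t\| + \|v^{\theta_1}(t,X_t^1,c) - v^{\theta_2}(t,X_t^1,c)\|$. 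Gronwall's inequality, together with $\Delta_0 = 0$, then gives the integral form $\|\Delta_1\| \le \int_0^1 e^{L(1-s)} \|v^{\theta_1}(s,X_s^1,c) - v^{\theta_2}(s,X_s^1,c)\|\,ds$.

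Squaring this estimate and applying Cauchy-Schwarz to the pair $(e^{L(1-s)},\, \|v^{\theta_1}-v^{\theta_2}\|)$ bounds $\|\Delta_1\|^2$ by $\bigl(\int_0^1 e^{2L(1-s)}\,ds\bigr)\int_0^1 \|v^{\theta_1}-v^{\theta_2}\|^2\,ds$, where $\int_0^1 e^{2L(1-s)}\,ds = (e^{2L}-1)/(2L) \le e^{2L}$ for $L \ge 0$. Taking expectation over the coupling and noting that by construction $X_s^1 \sim p_s^{\theta_1}$ while $c \sim p(c)$ is fixed along the trajectory, the right-hand side becomes exactly $e^{2L}\int_0^1 \mathbb{E}_{x \sim p^{\theta_1}, c \sim p(c)}[\|v^{\theta_1}-v^{\theta_2}\|^2]\,ds$, completing the chain $W_2^2 \le \mathbb{E}\|\Delta_1\|^2 \le$ claimed bound.

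The main obstacle I anticipate is technical rather than conceptual: one must ensure the ODEs are well-posed so that the coupling is actually defined, justify differentiating $\|\Delta_t\|$ at times where $\Delta_t$ may vanish (using $\|\Delta_t\|^2$ or Dini derivatives is the cleanest fix), and verify the Fubini-style interchange of time integral and expectation. It is worth emphasizing that the Lipschitz assumption is imposed only on $v^{\theta_2}$, which is exactly why the integrand on the right-hand side is taken under $p^{\theta_1}$: we control how the $v^{\theta_2}$-flow deviates from a trajectory of the $v^{\theta_1}$-flow, not symmetrically, and this asymmetry is harmless because the bound will be applied with $\theta_1 = \theta$ (the current learner) and $\theta_2 = \mathrm{ref}$ (the sufficiently regular reference model).
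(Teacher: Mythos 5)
Your proof is correct: the synchronous coupling from a shared $(x_0,c)$, the Gronwall estimate that uses the Lipschitz hypothesis on $v^{\theta_2}$ alone, and the Cauchy--Schwarz step with $\int_0^1 e^{2L(1-s)}\,ds \le e^{2L}$ all go through, and the asymmetry you flag (the integrand evaluated along the $\theta_1$-trajectory, so that $x \sim p_s^{\theta_1}$) is exactly the right reading of the statement. Note that the paper itself states this theorem without proof, importing it from the cited references; your argument is essentially the standard one given there, so there is nothing to contrast.
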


Approximating $W_2^2\left(p_\theta, p_{\theta_{ref}}\right)$ via Monte Carlo estimation, we penalize deviations from the reference model's vector field $v^{\theta_{ref}}$ as $\Omega\left(\theta ; \theta_{ref}\right)=\int_0^1 \mathbb{E}_{x, c}\left[\left\|v^\theta-v^{\theta_{ref}}\right\|^2\right] d s$, which yields the regularized loss: $\mathcal{L}_{\mathrm{reg}}(\theta)=\mathcal{L}_{\text {online }}(\theta)+\alpha \cdot \Omega\left(\theta ; \theta_{ref}\right)$. This constrains policy updates to stay near the reference model's behavior, preventing collapse while enabling reward-driven refinement.

\subsection{Online Actor-Critic with Wasserstein Regularization for Flow Matching}
\label{sec: Fine-tuning Flow Matching via Online Actor-Critic with Wasserstein Regularization}

We now bring together the preceding components—generalized critic weighting, value estimation, and Wasserstein regularization—into a unified actor-critic algorithm for online fine-tuning of flow matching. The actor (flow model $\theta$) adjusts its parameters to produce trajectories that maximize expected return under a critic-based weighting, while the critic (value function $\phi$) learns to estimate future rewards for each partial state. A $W_2$ penalty constrains model updates away from collapsing, ensuring stable training. Formally, the critic is optimized by minimizing:
\begin{equation}
    \mathcal{L}_{\text{critic}}(\phi)=\mathbb{E}_{t \sim \mathcal{U}[0,1], c \sim p(c), x_1 \sim p_\theta^{(n-1)}(x_1 \mid c), x_t \sim p(x_t \mid x_1, c)}\left[\left(V_\phi(x_t, t, c)-\mathcal{RS}(r(x_1, c))\right)^2\right]
\end{equation}
where $\mathcal{RS}$ denotes reward shaping through min-max normalization. The actor is updated using:
\begin{align}
    \mathcal{L}_{\text{actor-reg}}(\theta)=&\mathbb{E}_{t, x_1, c, x_t}\left[w_\phi(x_t, t, c)\left\|v_\theta(t, x_t, c)-u_t(x_t \mid x_1, c)\right\|^2\right]\\
    &+\alpha \cdot \mathbb{E}_{x_t, t, c}\left[\|v_\theta(t, x_t, c)-v_{\theta_{\text{ref}}}(t, x_t, c)\|^2\right]
\end{align}
with samples drawn from the same distribution as the critic. The weight function $w_\phi(x_t, t, c)$ incorporates our dual-stability mechanism through a strategic warm-up process. During the initial $k$ training steps, we employ group-relative advantage estimation with $w_\phi(x_t, t, c)=\exp(\tau \cdot A_t^{\text{group}})$ where $A_t^{\text{group}}=\frac{\mathcal{RS}(r(x_1, c))-\mu_\mathcal{B}}{\sigma_\mathcal{B}+\epsilon}$, allowing the critic to mature before influencing policy updates. After warm-up, we transition to critic-guided advantages with $w_\phi(x_t, t, c)=\exp(\tau \cdot A_{\text{clip}})$ where $A_{\text{clip}}=\text{clip}(\mathcal{RS}(r(x_1, c))-V_\phi(x_t, t, c),-\delta, \delta)$. This advantage clipping prevents destructive policy updates by truncating extreme advantage values. Combined with Wasserstein regularization (controlled by $\alpha$), our framework achieves a balanced policy optimization that leverages intermediate state evaluation while preventing distribution collapse—a critical advancement over previous approaches that relied solely on outcome rewards.

\section{Experiment}
\label{sec: experiment}

\subsection{Experimental Setup}
We empirically evaluate our AC-Flow on large-scale flow matching models, particularly Stable Diffusion 3 \citep[SD3]{sd3}, to assess its effectiveness in stable optimization and intermediate state value estimation. Our experiments address three key questions: (1) whether our method can effectively learn to evaluate intermediate states and stabilize actor-critic training compared to methods relying solely on value function regression, (2) whether our approach outperforms baseline fine-tuning methods that only utilize final rewards in terms of generation quality, generalization ability and diversity, and (3) how effectively our method handles challenging text-to-image alignment tasks involving complex semantics, spatial relationships, and numerical specifications. We adopt $\phi\left(x_t, t, c\right)=r(x_1,c)-V(x_t,t,c)$, warm steps $k=500$, advantage clip $\delta=5$ and $\alpha=\tau=1$ for all experiments without the need of hyperparameter tuning. See App. \ref{app: hyper-paramter}  and App. \ref{app: Experimental Details} for more experimental details.

\subsection{Main Results}

\begin{table}[!t]
\caption{Performance and Diversity comparison of different fine-tuning methods on text-image alignment using SD3 for the DrawBench prompt datasets \citep{draw_bench} as DPOK \citep{dpok}. \colorbox{violet!20}{Best scores} are highlighted in violet, \colorbox{teal!15}{second-best} in teal. For AWR, we adopt group relative advantage estimation (GRAE) as GRPO \citep{grpo}. All fine-tuning methods trained with CLIP Score \citep{clip}, while we adopt Diversity Score based on pair-wise distance of Clip Embeddings as \citep{iclr_rwr,adjoint_matching}, and HPS V2 from \citep{hpsv2} and ImageReward from \citep{image_reward}. See App. \ref{app: Evaluation Metrics} for more details. We report standard errors over three seeds.}
\label{tab:main_results}
\begin{center}
 \resizebox{0.8\textwidth}{!}{
\begin{tabular}{lcccc}
\toprule
\multirow{2}{*}{\textbf{Method}} & \multicolumn{2}{c}{\textbf{Task Metrics}} & \multicolumn{2}{c}{\textbf{Human Preference}} \\
\cmidrule(lr){2-3} \cmidrule(lr){4-5}
 & \textbf{CLIPScore $\uparrow$} & \textbf{DiversityScore $\uparrow$} & \textbf{HPS v2 $\uparrow$} & \textbf{ImageReward $\uparrow$} \\
\midrule
\multicolumn{5}{c}{\textit{Base Model}} \\
\midrule
SD3 \citep{sd3} & 29.37${\pm 0.23}$ & \colorbox{violet!20}{\textbf{4.77}${\pm 0.14}$} & 27.67${\pm 0.88}$ & 0.13${\pm 0.01}$ \\
\midrule
\multicolumn{5}{c}{\textit{Our Methods}} \\
\midrule
AC-Flow (Ours) & \colorbox{violet!20}{\textbf{32.93}${\pm 0.11}$} & 2.77${\pm 0.07}$ & \colorbox{violet!20}{\textbf{30.59}${\pm 0.28}$} & \colorbox{violet!20}{\textbf{1.20}${\pm 0.01}$} \\
AC-Flow (w/o W2) & 30.43${\pm 0.23}$ & 2.49${\pm 0.08}$ & 27.65${\pm 0.17}$ & 0.91${\pm 0.02}$ \\
\midrule
\multicolumn{5}{c}{\textit{Other Fine-tuning Methods}} \\
\midrule
Diffusion-DPO \citep{diff_dpo} & 30.24${\pm 0.11}$ & \colorbox{teal!15}{4.37${\pm 0.09}$} & 28.21${\pm 0.11}$ & 0.85${\pm 0.04}$ \\
RAFT \citep{Raft} & 29.73${\pm 0.29}$ & 2.14${\pm 0.06}$ & 26.87${\pm 0.10}$ & 0.79${\pm 0.04}$ \\
ReFT \citep{ReFT} & 29.95${\pm 0.36}$ & 2.24${\pm 0.08}$ & 27.23${\pm 0.11}$ & 0.92${\pm 0.03}$ \\
RWR+W2 \citep{rwr} & 30.56${\pm 0.15}$ & 2.54${\pm 0.07}$ & 27.78${\pm 0.23}$ & 1.03${\pm 0.02}$ \\
AWR+W2 \citep{awr} & \colorbox{teal!15}{30.97${\pm 0.33}$} & 2.73${\pm 0.08}$ & \colorbox{teal!15}{28.29${\pm 0.23}$} & \colorbox{teal!15}{1.05${\pm 0.08}$} \\
\bottomrule
\end{tabular}
}
\end{center}
\end{table}

Our experimental results in Table \ref{tab:main_results} demonstrate the strong performance of our actor-critic framework across multiple dimensions. Most notably, our method achieves state-of-the-art CLIPScore performance, substantially outperforming both the baseline SD3 model and other fine-tuning approaches including RAFT, ReFT, diffusion-DPO and reward-weighted methods. This significant improvement in CLIPScore indicates that our method generates images that achieve better semantic alignment with the given text prompts, validating the effectiveness of our intermediate state evaluation and actor-critic learning framework. Table \ref{tab:com_time} details computational cost of different methods, demonstrating our critic model does not introduce too much computational overhead (i.e., 2GB more GPU Memory, 2 hours more running time compared to methods using outcome-reward like \citep{iclr_rwr}).

A key strength of our approach is its strong generalization capability. While all methods were trained using CLIP reward, our framework shows remarkable performance on unseen human preference metrics - HPS v2 and ImageReward. Our method achieves the highest scores on both metrics, significantly outperforming other approaches. This cross-metric generalization suggests that our actor-critic framework learns fundamental aspects of text-image alignment rather than overfitting to the training objective. The consistent performance across different human preference metrics validates that our method captures improvements in generation quality that align with human preferences.

Most significantly, AC-Flow achieves SOTA performance across generation quality metrics and human preference evaluation while maintaining sample diversity. While exhibiting slightly lower diversity than the baseline SD3, our method substantially outperforms recent fine-tuning approaches in navigating the quality-diversity trade-off.  Ablation studies further validate that W2 regularization simultaneously enhances both quality metrics and sample diversity.

\subsection{Stabilizing Online Actor-Critic Fine-tuning}

\begin{figure}[!t]
	\centering
 \includegraphics[width=0.75\linewidth]{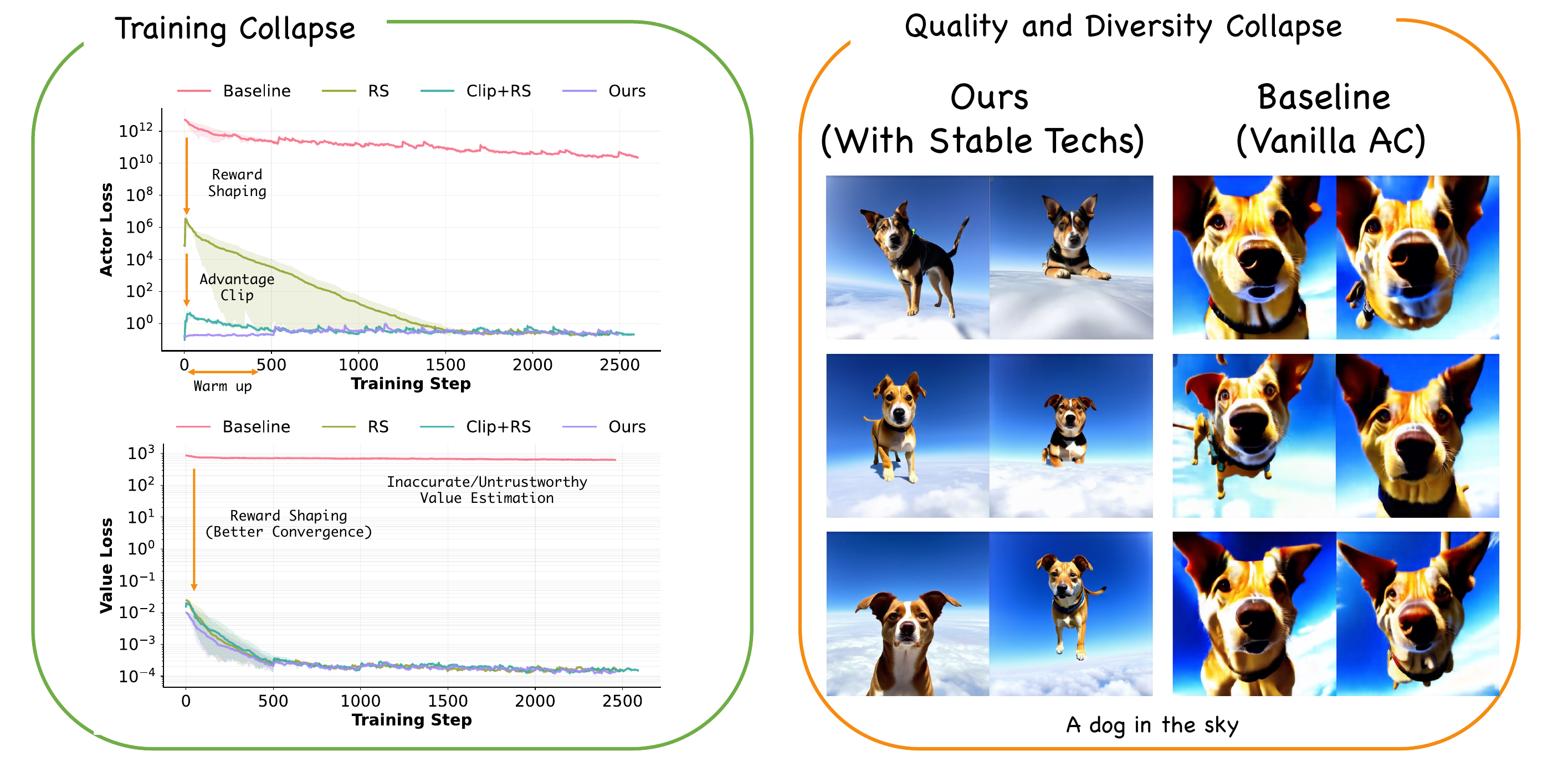}
	\caption{Ablation Studies of Proposed Stabilization Techniques.}
    \label{fig: exp stable critic training}
\end{figure}

\textbf{Training stability improvements.} Ablation studies in Fig. \ref{fig: exp stable critic training} reveal the effectiveness of our proposed components in stabilizing actor-critic training for flow matching models. The baseline method, lacking any stabilization techniques, exhibits severely unstable behavior in both critic and actor learning, as shown by the highly fluctuating loss curves (left). When incorporating reward shaping (RS), we observe a dramatic improvement in training stability, with the actor loss decreasing by multiple orders of magnitude from $10^{12}$ to approximately $10^2$. Adding advantage clipping (RS+Clip) further reduces fluctuations in both actor and value losses, preventing aggressive policy changes based on uncertain value estimates.

\textbf{Quality and diversity preservation.} Our full method, combining reward shaping, advantage clipping, and warm-up strategies, achieves superior stability across both loss metrics while producing substantially better outputs. The quality and diversity comparison (right) demonstrates this clearly: while the baseline (i.e., Vanilla AC, no stable techniques) suffers from severe visual artifacts and distortion in the "dog in the sky" generations, our stabilized approach maintains consistent visual coherence and reasonable diversity across samples. Most notably, our results demonstrate that reliable value estimation and stable policy improvement can be achieved even with relatively simple network architectures when using our stabilization techniques. The clear progression from baseline to our full method provides strong empirical support for the synergistic benefits of our three key components. See App. \ref{app: Additional Experimental Results} for learning curves.

\subsection{Qualitative Comparison in Complex Semantic Alignment Tasks}

\begin{figure*}[!t]
	\centering
 \includegraphics[width=0.8\linewidth]{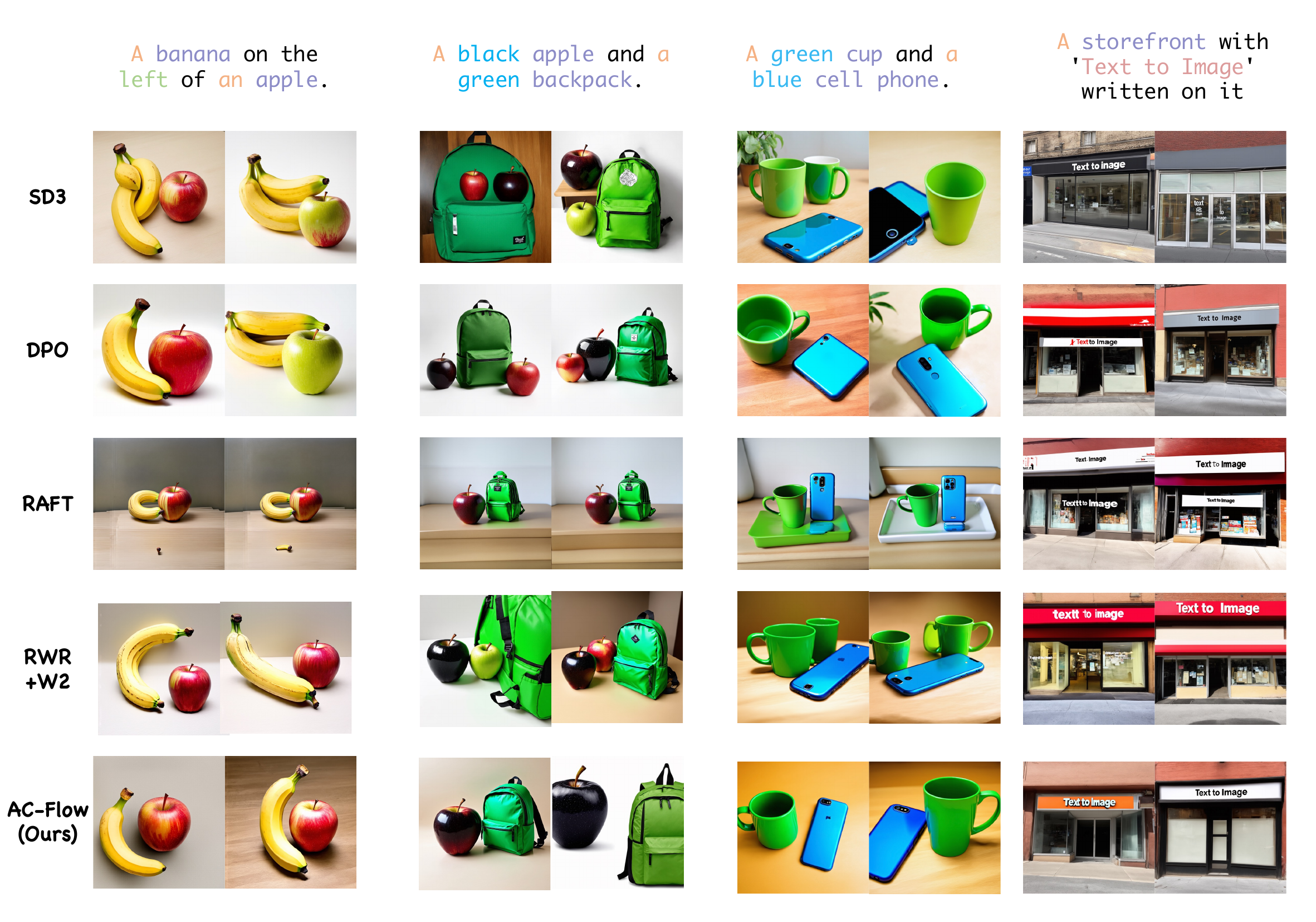}
	\caption{Qualitative Comparison. Our AC-Flow demonstrates superior performance across \textcolor[RGB]{177, 208, 149}{Spatial Positioning}, \textcolor[RGB]{79, 173, 234}{Coloring}, \textcolor[RGB]{139, 139, 195 }{Attribute Binding}, \textcolor[RGB]{233, 180, 138}{Object Counting}, and \textcolor[RGB]{210, 158, 157}{Text Rendering}. More generative results can be found in App. \ref{app: Additional Experimental Results}.}
    \label{fig: main com complex semantic alignment}
\end{figure*}

As illustrated in Figure \ref{fig: main com complex semantic alignment}, our AC-Flow demonstrates remarkable capabilities in handling challenging text-to-image alignment tasks compared to baseline models.

\textbf{Superior performance in spatial and attribute relationships.} For spatial relationship prompts like "a banana on the left of an apple," our approach precisely positions objects while maintaining natural visual quality, whereas SD3 and diffusion-DPO often struggle with accurate spatial arrangement while RAFT and RWR has to sacrifice generative quality and diversity. In attribute binding challenges exemplified by "a black apple and a green backpack," AC-Flow correctly assigns color attributes to their respective objects, while baseline methods frequently exhibit attribute confusion or semantic drift. Our method similarly excels at complex object relationships in "a green cup and a blue cell phone," where intermediate state evaluation provides granular control over the generation.

\textbf{Better text rendering and semantic control.} Perhaps most notably, AC-Flow produces significantly clearer text rendering in the "Text to Image" storefront prompt, a particularly challenging task for generative models. These qualitative improvements stem from our approach's unique ability to evaluate and optimize intermediate states during the generative trajectory, rather than relying solely on outcome rewards. The results align with our quantitative findings in Table \ref{tab:main_results}, confirming that our actor-critic framework with stable intermediate value estimation enables more precise semantic control while maintaining visual consistency and generative diversity. See App. \ref{app: Additional Experimental Results} for more results.

\section{Conclusion}
\label{sec: conclusion}

In this paper, we present AC-Flow, the first framework to successfully enable online RL fine-tuning of flow matching generative models with intermediate feedback. By leveraging advantage functions to provide granular credit assignment throughout the generative process, we overcome fundamental limitations of previous outcome-reward approaches (e.g., uniform credit assignment). Our comprehensive stabilization strategy combines reward shaping, advantage clipping, and warm-up phases to ensure stable training while preventing model collapse through Wasserstein regularization. Extensive experiments on SD3 demonstrate state-of-the-art performance in text-to-image alignment tasks and superior generalization to unseen human preference models (See Tab. \ref{tab:main_results}). The qualitative results show AC-Flow's ability to precisely control spatial relationships and attribute binding while maintaining visual quality (See Fig. \ref{fig: main com complex semantic alignment}). Our approach achieves these improvements with reasonable computational overhead, demonstrating that lightweight critic architectures can effectively fine-tune FM models without compromising between stability, efficiency, and generative quality—a significant advancement over previous approaches that struggled with the credit assignment challenge in FM models. More discussion on our limitations and broader impacts can be found in App. \ref{app: discussion}.

\bibliographystyle{plain}
\bibliography{example_paper}


\clearpage

\appendix


\section{Discussion}
\label{app: discussion}

\subsection{Motivation of Online RL Fine-tuning}

While traditional approaches like SFT \citep{instructed_rlhf} and offline RL \citep{dpo} have demonstrated success in training large generative models like SD3 \citep{sd3}, recent breakthroughs - particularly DeepSeek-R1's impressive results with online RL for LLMs \citep{guo2025deepseek_r1} - have revealed the immense potential of online reinforcement learning (even with simple GRPO \citep{grpo}). The ability to stably and continuously improve model performance using self-generated data \citep{pzero}, without requiring extensive human-collected datasets, represents a significant advancement in the field. However, successfully implementing online fine-tuning for large-scale flow matching models presents unique challenges that our work systematically addresses.

\subsection{Core Technical Challenges and Solutions}

The primary challenge in online fine-tuning  of flow matching models from process feedback stems from the  instability of actor-critic training in continuous-time and online settings. While GRPO \citep{grpo}, DeepSeek-R1 \citep{guo2025deepseek_r1}  and similar approaches achieve stability by avoiding critic estimation entirely, this comes at the cost of ignoring credit assignment for intermediate states. Our work demonstrates that it is possible to maintain training stability while leveraging the benefits of critic-based intermediate state evaluation through careful design choices:

{\begin{mdframed}[backgroundcolor=mygray]
\begin{enumerate}
     \item The introduction of a computationally efficient critic architecture proves that \textit{complex, computation-heavy networks aren't necessary for effective value estimation of intermediate states.}
    \item Our \textit{reward shaping and advantage clipping mechanisms prevent the destructive feedback loops} \citep{offline_to_online} that typically plague online actor-critic training while enabling stably improvement of model performance.
    \item The warm-up phase enables reliable critic learning before influencing policy updates, \textit{solving the cold-start problem.}
\end{enumerate}
\end{mdframed}}

\subsection{Robust and Easy-to-Use  Online RL Fine-tuning Framework}

Although RLHF \citep{instructed_rlhf,dpo,dpok,Raft,ReFT,Rest} has been extensively studied, existing methods often suffer from excessive complexity, computational inefficiency, numerous hyperparameters, and deployment challenges. Simple outcome-driven approaches \citep{Raft,ReFT,ddpo,iclr_rwr}, while easier to implement, fail to address the fundamental credit assignment problem. AC-Flow bridges this gap by providing a robust, easy-to-use framework that enables efficient fine-tuning of continuous-time flow matching models using purely self-generated data.

Previous attempts at training with self-generated data have struggled with instability, catastrophic forgetting, and model collapse \citep{model_collapse,ddpo,collapse_llm1,collapse_llm2,collapse2,policy_collapse,collapse3}. Our comprehensive stabilization techniques and Wasserstein regularization method enable stable value function learning using a computationally efficient critic architecture. The training curves demonstrate rapid convergence of both critic and actor losses to reasonable ranges while maintaining consistent improvement.

\subsection{Effective Credit Assignment without Excessive Computation Overhead}

Traditional approaches that apply outcome rewards across all intermediate states suffer from high variance and potentially misleading update signals \citep{rwr,iclr_rwr,ddpo,Raft,ReFT}. AC-Flow's critic-based advantage estimation provides granular feedback for each state in the generative trajectory. This precise credit assignment is achieved without too much computational overhead typically associated with critic networks, thanks to our efficient architecture design and stabilization techniques.

\subsection{Broader Impacts}
\label{app: Broader Impacts}
Our AC-Flow framework offers significant positive societal impacts through enabling more precise control over generative AI models with intermediate feedback, potentially leading to more aligned and reliable models. The computationally efficient critic architecture and stabilization techniques democratize access to high-quality fine-tuning with reduced computational requirements. Our Wasserstein regularization mechanism provides a built-in mitigation strategy by constraining model outputs while allowing for improvement, and the advantage clipping mechanism offers fine-grained control that could be adapted for safety-focused refinement. In fact, our method can help fine-tune the model to become more responsible and safer by adding some safety based reward. We recommend that implementations include appropriate content filtering systems and follow responsible AI release practices to minimize potential harms.

\subsection{Future Implications}

The success of AC-Flow in stabilizing online actor-critic training for flow matching models has broader implications for the field of generative AI. Our framework demonstrates that \textbf{the benefits of online RL - continuous improvement, data efficiency, and autonomous learning - can be realized without sacrificing training stability or computational efficiency}. This opens new possibilities for developing self-improving generative models that can adapt and enhance their capabilities through interaction with their own outputs.

The ability to achieve stable convergence and policy round through computationally efficient architectures and principled regularization suggests that online RL could become a more practical and widely-adopted approach for fine-tuning large-scale generative models. AC-Flow provides a foundation for future research into more efficient and robust online learning methods that balance performance improvements with computational constraints.

\clearpage
\section{Background}
\label{app: background}

In this section, we present the key concepts and frameworks that form the foundation of our approach. We begin by examining flow matching for conditional generation, followed by reinforcement learning methods for model alignment, and conclude with relevant policy optimization techniques.

\subsection{Flow Matching for Conditional Generation}

\subsubsection{Flow Matching Formulation}

Flow Matching \citep[FM]{fmgm} trains a time-dependent vector field $v_\theta(t, x)$ to transport samples from a base distribution $p_0\left(x_0\right)$ to a target $q\left(x_1\right)$ via the ODE:

\begin{equation}
    \frac{d x_t}{d t}=v_\theta\left(t, x_t\right), \quad x_0 \sim p_0\left(x_0\right)
\end{equation}

where $x_t$ denotes the state at time $t \in[0,1]$. The FM objective aligns $v_\theta$ with a target vector field $u_t(x)$, derived from the marginal probability path $p_t(x)$ :

\begin{equation}
    \mathcal{L}_{\mathrm{FM}}(\theta)=\mathbb{E}_{t \sim \mathcal{U}[0,1], x \sim p_t(x)}\left\|v_\theta(t, x)-u_t(x)\right\|^2 .
\end{equation}

Here, $u_t(x)$ satisfies the continuity equation:

\begin{equation}
    \frac{\partial p_t(x)}{\partial t}+\nabla_x \cdot\left(p_t(x) u_t(x)\right)=0
\end{equation}

ensuring $p_t(x)$ evolves from $p_0\left(x_0\right)$ to $q\left(x_1\right)$. However, computing $u_t(x)$ requires integrating over $q\left(x_1\right)$, which is intractable for high-dimensional tasks.

\subsubsection{Conditional Flow Matching (CFM)}

To resolve this, CFM \citep{otcfm} conditions on individual samples $x_1 \sim \boldsymbol{q}\left(x_1\right)$, defining a conditional probability path $p_t\left(x \mid x_1\right)$ and a per-sample vector field $u_t\left(x \mid x_1\right)$. The CFM objective becomes:

\begin{equation}
    \mathcal{L}_{\mathrm{CFM}}(\theta)=\mathbb{E}_{t \sim \mathcal{U}[0,1], x_1 \sim q\left(x_1\right), x \sim p_t\left(x \mid x_1\right)}\left\|v_\theta(t, x)-u_t\left(x \mid x_1\right)\right\|^2 .
\end{equation}

Key Properties:
\begin{itemize}
    \item \textbf{Tractability:} $u_t\left(x \mid x_1\right)$ is defined per-sample (e.g., $u_t\left(x \mid x_1\right)=x_1-x_0$ for linear paths).
    \item \textbf{Gradient Equivalence:} $\nabla_\theta \mathcal{L}_{\mathrm{FM}}(\theta)=\nabla_\theta \mathcal{L}_{\mathrm{CFM}}(\theta)$, making CFM a practical alternative.
\end{itemize}

\subsubsection{Flow Matching for Conditional Generation}

For conditional tasks (e.g., text-to-image synthesis), CFM incorporates auxiliary context c (e.g., text prompts). The target distribution becomes $q\left(x_1 \mid c\right)$, and the vector field $v_\theta(t, x, c)$ is conditioned on $c$:

\begin{equation}
\mathcal{L}_{\mathrm{CFM}-\operatorname{cond}}(\theta)=\mathbb{E}_{\substack{t \sim \mathcal{U}[0,1], c \sim p(c), x_1 \sim q\left(x_1 \mid c\right), x \sim p\left(x_t \mid x_1, c\right)}}\left\|v_\theta(t, x, c)-u_t\left(x \mid x_1, c\right)\right\|^2
\end{equation}

Conditional Probability Paths:

\textbf{Linear Interpolation:}

\begin{equation}
    x_t=(1-t) x_0+t x_1, \quad u_t\left(x \mid x_1, c\right)=x_1-x_0
\end{equation}

\textbf{Optimal Transport (OT) \citep{otcfm,ot_tee_nips23,con_former}:}

\begin{equation}
    x_t=x_0+t\left(x_1-x_0\right), \quad u_t\left(x \mid x_1, c\right)=x_1-x_0
\end{equation}

\textbf{Gaussian Paths:}

\begin{equation}
    p\left(x_t \mid x_1, c\right)=\mathcal{N}\left(x \mid \mu_t\left(x_1, c\right), \sigma_t^2\left(x_1, c\right) \mathbf{I}\right)
\end{equation}

with $u_t\left(x \mid x_1, c\right)=\frac{d \mu_t}{d t}+\left(\frac{d \sigma_t}{d t}\right) \sigma_t^{-1}\left(x-\mu_t\right)$.

\subsubsection{Push Forward  Mechanism}

The learned $v_\theta(t, x, c)$ induces a deterministic push-forward map $\Phi_{0 \rightarrow 1}^c$, transporting $p_0\left(x_0\right)$ to $q\left(x_1 \mid c\right):$

\begin{equation}
q\left(x_1 \mid c\right)=\Phi_{0 \rightarrow 1}^c \# p_0\left(x_0\right)
\end{equation}

where \# denotes the push-forward operation. This avoids explicit modeling of $q\left(x_1 \mid c\right)$, instead relying on the ODE's geometric transformation.

\subsubsection{Sampling Process}

To generate samples conditioned on $c$ :
\begin{enumerate}
    \item Draw $x_0 \sim p_0\left(x_0\right)$ and $c \sim p(c)$.
    \item Solve the ODE:
\begin{equation}
x_1=x_0+\int_0^1 v_\theta\left(t, x_t, c\right) d t
\end{equation}
Numerically, this is approximated using methods like Euler:

\begin{equation}
    x_{t_{k+1}}=x_{t_k}+\Delta t \cdot v_\theta\left(t_k, x_{t_k}, c\right), \quad t_k=k \Delta t
\end{equation}
\end{enumerate}

\subsubsection{Likelihood Calculation in Flow Matching}

In flow matching, the log-likelihood of a sample $x_1 \sim q\left(x_1\right)$ under the learned model $p_\theta\left(x_1\right)$ can be computed using the instantaneous change of variables formula \citep{Neuralode2018}:

\begin{equation}
\label{equ: likelihood in fm}
    \log p_\theta\left(x_1\right)=\log p_0\left(x_0\right)-\int_0^1 \nabla_x \cdot v_\theta\left(t, x_t\right) d t
\end{equation}

where $x_t$ follows the ODE $\frac{d x_t}{d t}=v_\theta\left(t, x_t\right)$, and $\nabla_x \cdot v_\theta$ is the divergence of the vector field.

\paragraph{Intractability.} \textbf{1. Divergence Computation:} Calculating $\nabla_x \cdot v_\theta\left(t, x_t\right)$ requires $\mathcal{O}\left(d^2\right)$ operations for $x_t \in \mathbb{R}^d$, which is prohibitive for high-dimensional data (e.g., images with $d \sim 10^6$ ).
\textbf{2. Numerical Integration:} Approximating $\int_0^1 \nabla_x \cdot v_\theta d t$ introduces cumulative errors, especially with adaptive step sizes.

\subsubsection{KL Divergence in Flow Matching}

The KL divergence $D_{\mathrm{KL}}\left(q\left(x_1\right) \| p_\theta\left(x_1\right)\right)$ measures the discrepancy between the target $q\left(x_1\right)$ and the model $p_\theta\left(x_1\right)$ :

\begin{equation}
    D_{\mathrm{KL}}\left(q \| p_\theta\right)=\mathbb{E}_{x_1 \sim q\left(x_1\right)}\left[\log q\left(x_1\right)-\log p_\theta\left(x_1\right)\right] .
\end{equation}

Using Equ. \eqref{equ: likelihood in fm}, this becomes:

\begin{equation}
\label{equ: kl in fm}
    D_{\mathrm{KL}}\left(q \| p_\theta\right)=\mathbb{E}_{q\left(x_1\right)}\left[\log q\left(x_1\right)-\log p_0\left(x_0\right)+\int_0^1 \nabla_x \cdot v_\theta\left(t, x_t\right) d t\right] .
\end{equation}

\paragraph{Intractability.} \textbf{1. High-Dimensional Expectation: }The expectation $\mathbb{E}_{q\left(x_1\right)}[\cdot]$ requires integration over $\mathbb{R}^d$, which is infeasible for large $d$. \textbf{2. Density Estimation:} $\log q\left(x_1\right)$ is often unknown (e.g., $q\left(x_1\right)$ is an implicit distribution).

\begin{theorem}[Intractability of Exact Likelihood]
\label{theorem: Intractability of Exact Likelihood}
    Under standard complexity-theoretic assumptions, computing $\log p_\theta\left(x_1\right)$ or $D_{\mathrm{KL}}\left(q \| p_\theta\right)$ for a flow matching model $v_\theta(t, x)$ is \#P-hard in the dimension $d$.
\end{theorem}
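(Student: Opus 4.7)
The plan is to establish $\#\mathrm{P}$-hardness by a polynomial-time Turing reduction from a classical counting problem—naturally the permanent of a $0/1$ matrix (Valiant's theorem) or the partition function of an Ising model on $d$ spins, both $\#\mathrm{P}$-hard in $d$. The reduction encodes the instance into a flow matching model whose terminal distribution is exactly the normalized target whose partition function realizes the counting problem. Since any evaluation of $\log p_\theta(x_1^\star)$ exposes $-\log Z$ up to a polynomially computable correction, a polynomial-time algorithm for exact likelihoods would collapse $\#\mathrm{P}$ into $\mathrm{FP}$.

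Concretely, I would fix $M \in \{0,1\}^{d \times d}$ and define an unnormalized density $\tilde q$ supported on a structured mixture indexed by permutations $\sigma \in S_d$ with weights $\prod_i M_{i,\sigma(i)}$, so that $Z = \int \tilde q$ equals $\mathrm{perm}(M)$ times an explicit constant. Using a linear conditional path $x_t = (1-t)x_0 + t x_1$ with reference field $u_t(x \mid x_1) = x_1 - x_0$ from the excerpt, I take $v^\star(t, x) = \mathbb{E}[\, x_1 - x_0 \mid x_t = x\,]$ as the vector field that minimizes the CFM objective when the data distribution is $q = \tilde q / Z$. The induced push-forward $\Phi_{0 \to 1} \# p_0 = q$ then satisfies $p_\theta(x_1^\star) = q(x_1^\star)$, so reading off $\log p_\theta(x_1^\star)$ recovers $\log \tilde q(x_1^\star) - \log Z$ and hence, at a test point chosen so that $\log \tilde q$ is polynomially computable, $\log \mathrm{perm}(M)$.

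The KL statement follows by piggybacking. Writing $D_{\mathrm{KL}}(q \| p_\theta) = -H(q) - \mathbb{E}_q[\log p_\theta(x_1)]$, an oracle for KL against a family of simple query distributions with explicitly known entropy—for example, narrow Gaussian bumps $q_\epsilon$ centred at chosen test points $x_1^\star$, or finite discrete mixtures over a polynomial-sized set of atoms—allows one to extract $\log p_\theta(x_1^\star)$ by standard oracle algebra, and thus inherits the $\#\mathrm{P}$-hardness from the first part.

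The main obstacle, and the step that needs the most care, is showing that $v^\star$ admits a polynomial-size symbolic description; otherwise the reduction is not polynomial in the size of $M$ and the hardness claim is vacuous. I would lean on the fact that the marginal field $\mathbb{E}[x_1 - x_0 \mid x_t = x]$ is specified by the instance $M$ (or the Ising Hamiltonian) in polynomially many symbols, even though its pointwise numerical evaluation already hides a $\#\mathrm{P}$ computation. The hardness is therefore deposited in the likelihood/KL oracle rather than in writing down the model, and the same symbolic specification can be compiled into a polynomial-size neural parametrization via a soft-indicator mixture, closing the reduction.
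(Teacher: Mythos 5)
Your high-level strategy---a polynomial-time Turing reduction from the $0/1$ matrix permanent---is the same one the paper invokes: its own proof is essentially a citation to a prior reduction showing exact likelihood computation in continuous normalizing flows is equivalent to computing matrix permanents, plus a (much weaker) appeal to high-dimensional integration for the KL part. So you are aiming at the right target problem, and your KL-to-likelihood oracle algebra is in fact more careful than what the paper writes down. The problem is where your reduction deposits the hardness; you correctly flag this as ``the main obstacle'' but do not repair it.

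Concretely: you plant $\mathrm{perm}(M)$ in the \emph{data} distribution $q=\tilde q/Z$ and then take the model to be the exact CFM minimizer $v^\star(t,x)=\mathbb{E}[\,x_1-x_0\mid x_t=x\,]$, so that $p_\theta=q$ and the likelihood oracle reveals $\log Z$. But the computational problem whose hardness the theorem asserts takes as input a flow matching model $v_\theta$, i.e.\ an explicitly given, polynomial-time-evaluable parametrization. Your $v^\star$ is a ratio of sums over all $d!$ permutations; evaluating it pointwise is itself \#P-hard, and no polynomial-size circuit computes it exactly unless \#P collapses into FP. The proposed fix---compiling the symbolic specification into a polynomial-size ``soft-indicator mixture'' network---cannot realize $v^\star$ exactly, and any approximation $v_\theta\approx v^\star$ destroys the exact identity $p_\theta(x_1^\star)=\tilde q(x_1^\star)/Z$ on which the argument rests; since the permanent is an integer recovered from $\log Z$, even a small multiplicative perturbation of the terminal density invalidates the extraction. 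As written, the reduction is therefore circular: the instance handed to the likelihood oracle is not a legitimate instance of the problem, so hardness of the oracle does not follow. The standard repair, and the shape of the reduction the paper cites, runs the other way: construct an \emph{explicit, cheap-to-evaluate} vector field whose divergence integral $\int_0^1\nabla_x\cdot v_\theta\,dt$ (equivalently, the log-Jacobian of the flow map) encodes $\mathrm{perm}(M)$ through $\log p_\theta(x_1)=\log p_0(x_0)-\int_0^1\nabla_x\cdot v_\theta\,dt$, so that the model is easy and only the likelihood is hard. Finally, note that your KL step silently strengthens the oracle to one answering $D_{\mathrm{KL}}(\cdot\,\|\,p_\theta)$ for arbitrary first arguments, whereas the theorem fixes $q$ as the target distribution; that mismatch should at least be stated.
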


\begin{proof}[Proof of Theorem \ref{theorem: Intractability of Exact Likelihood}]
    \textbf{Reduction to Matrix Permanent:} Following the reduction in \citep{huang2020}, exact likelihood computation in continuous normalizing flows can be shown equivalent to computing matrix permanents, which is \#P-hard \citep{Complexity_of_Compute}.

    \textbf{Divergence as High-Dimensional Integral:} Equ. \eqref{equ: kl in fm} involves integrating over $\mathbb{R}^d$, which is known to suffer from the curse of dimensionality \citep{dp1957}.
\end{proof}




\subsection{RL Formulation for Generative Model Fine-tuning}

\subsubsection{RL Formulation}
Reinforcement learning (RL) frames generative model fine-tuning (e.g., flow matching, diffusion models, LLMs) as a sequential decision process. Let the policy $\pi_\theta$ model a trajectory from an initial state $x_0 \sim$ $p_0\left(x_0\right)$ to a final sample $x_T \sim \boldsymbol{q}\left(x_T \mid c\right)$, conditioned on context $c$ (e.g., text prompts). Formally, this is a continuous-time MDP:
\begin{enumerate}
    \item   State: $s_t=\left(x_t, t, c\right)$, where $x_t$ is the intermediate state at time $t \in[0,1]$.
    \item Action: $a_t=v_\theta\left(t, x_t, c\right)$, the policy's update direction (e.g., vector field in flow matching).
    \item  Reward: A predefined function $r\left(x_t, a_t, c\right)$ quantifying alignment with $c$.
\end{enumerate}

The objective is to maximize the expected cumulative reward:

\begin{equation}
    J\left(\pi_\theta\right)=\mathbb{E}_{\tau \sim \pi_\theta}\left[\int_0^1 r\left(x_t, a_t, c\right) d t\right] \overset{Sparse\ Rewards}{=} \mathbb{E}_{x_1 \sim \pi_\theta}\left[r\left(x_1,  c\right)\right] 
\end{equation}

where $\tau=\left\{s_t, a_t\right\}_{t=0}^1$ is a trajectory.

\begin{theorem}[Policy Gradient Theorem]
\begin{equation}
    \nabla_\theta J\left(\pi_\theta\right)=\mathbb{E}_{\tau \sim \pi_\theta}\left[\int_0^1 \nabla_\theta \log \pi_\theta\left(a_t \mid x_t, t, c\right) \cdot A^\pi\left(x_t, t, c, a_t\right) d t\right]
\end{equation}

where $A^\pi\left(x_t, t, c, a_t\right)=Q^\pi\left(x_t, t, c, a_t\right)-V^\pi\left(x_t, t, c\right)$ is the advantage function.
\end{theorem}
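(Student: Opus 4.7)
The plan is to follow the classical derivation of the policy gradient theorem, reinterpreted for the continuous-time trajectory formulation used here. First I would express the objective as an expectation over trajectories, writing $J(\pi_\theta) = \int p_{\pi_\theta}(\tau) R(\tau)\, d\tau$, where $R(\tau) = \int_0^1 r(x_t, a_t, c)\, dt$ and the trajectory density factorizes into the initial distribution $p_0(x_0)$, the policy $\pi_\theta(a_t \mid x_t, t, c)$, and the transition kernel governing $x_t$, which is independent of $\theta$. Because everything except the policy term is $\theta$-free, only the policy contributes to $\nabla_\theta \log p_{\pi_\theta}(\tau)$.

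Next, I would apply the log-derivative trick, $\nabla_\theta p_{\pi_\theta}(\tau) = p_{\pi_\theta}(\tau)\nabla_\theta \log p_{\pi_\theta}(\tau)$, and collapse the trajectory log-density into an integral of per-step policy log-probabilities, yielding
\begin{equation}
\nabla_\theta J(\pi_\theta) = \mathbb{E}_{\tau \sim \pi_\theta}\!\left[\int_0^1 \nabla_\theta \log \pi_\theta(a_t \mid x_t, t, c)\, R(\tau)\, dt\right].
\end{equation}
I would then invoke causality: for fixed $t$, the action $a_t$ cannot influence rewards accrued on $[0,t)$, so replacing $R(\tau)$ by the reward-to-go $G_t = \int_t^1 r(x_s, a_s, c)\, ds$ leaves the expectation unchanged. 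Conditioning on $(x_t, t, c, a_t)$ and using the tower property gives $\mathbb{E}[G_t \mid x_t, t, c, a_t] = Q^\pi(x_t, t, c, a_t)$, which turns the integrand into $\nabla_\theta \log \pi_\theta(a_t \mid x_t, t, c)\, Q^\pi(x_t, t, c, a_t)$.

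To arrive at the stated advantage form, I would subtract the state-value baseline $V^\pi(x_t, t, c)$. The standard baseline-invariance argument applies: for any function $b$ depending only on $(x_t, t, c)$, one has $\mathbb{E}_{a_t \sim \pi_\theta}[\nabla_\theta \log \pi_\theta(a_t \mid x_t, t, c)\, b(x_t, t, c)] = b(x_t, t, c)\, \nabla_\theta \!\int \pi_\theta(a \mid x_t, t, c)\, da = 0$, since $\pi_\theta$ integrates to one. Choosing $b = V^\pi$ therefore replaces $Q^\pi$ with $A^\pi = Q^\pi - V^\pi$ without biasing the gradient, yielding exactly the claimed identity.

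The main obstacle I anticipate is making the continuous-time manipulations rigorous, specifically justifying the interchange of $\nabla_\theta$ with the trajectory expectation and the causality step when dynamics are governed by an ODE (or SDE, under stochastic relaxations of the FM vector field). The cleanest way around this is to first prove the discrete-time analogue on a partition $0 = t_0 < t_1 < \cdots < t_N = 1$, where the textbook derivation applies verbatim via the product rule on the factorized trajectory likelihood, and then pass to the limit $\max_i |t_{i+1} - t_i| \to 0$ under mild regularity (boundedness of $r$, Lipschitz continuity of $v_\theta$ in $x$, and dominated convergence to exchange $\nabla_\theta$ with the integrals). A secondary subtlety, worth a remark, is that in pure ODE-based flow matching the policy is effectively a Dirac measure on $a_t = v_\theta(t, x_t, c)$, so $\nabla_\theta \log \pi_\theta$ is ill-defined; the theorem is stated at the RL-abstraction level and its practical use in this paper's setting requires a stochastic surrogate policy, which is precisely why the authors opt for a weighting-based objective rather than a direct policy-gradient estimator.
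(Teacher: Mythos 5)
The paper states this Policy Gradient Theorem as background in the appendix and supplies no proof of its own, so there is nothing to match your argument against; what you give is the standard textbook derivation, and it is correct. Your chain of steps --- log-derivative trick on the factorized trajectory density, causality to replace the total return by the reward-to-go, the tower property to obtain $Q^\pi$, and the baseline-invariance identity $\mathbb{E}_{a_t\sim\pi_\theta}\left[\nabla_\theta\log\pi_\theta(a_t\mid x_t,t,c)\,b(x_t,t,c)\right]=0$ to pass from $Q^\pi$ to $A^\pi$ --- is exactly the argument this result rests on, and your proposal to make the continuous-time version rigorous by proving the discrete-time statement on a partition and passing to the limit under boundedness of $r$ and Lipschitzness of $v_\theta$ is the right way to handle the only nontrivial technical point. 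Your closing remark is also well taken and consistent with the paper's own framing: for a deterministic ODE-based flow the policy $\pi_\theta(\cdot\mid x_t,t,c)$ degenerates to a Dirac measure at $v_\theta(t,x_t,c)$, so $\nabla_\theta\log\pi_\theta$ is ill-defined and the theorem holds only at the abstract MDP level; this is precisely why the paper invokes its intractability results and replaces the policy-gradient estimator with a critic-weighted regression objective. No gaps.
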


\subsubsection{Reinforcement Learning from Human Feedback (RLHF)}

RLHF incorporates human preferences via KL regularization to ensure the policy $\pi_\theta$ does not deviate excessively from a reference policy $\pi_{\text {ref }}$. The objective becomes:

\begin{equation}
    J_{\mathrm{RLHF}}\left(\pi_\theta\right)=\mathbb{E}_{\tau \sim \pi_\theta}\left[\int_0^1 r\left(x_t, a_t, c\right) d t\right]-\beta \mathbb{E}_{\tau \sim \pi_\theta}\left[\int_0^1 \mathrm{KL}\left(\pi_\theta\left(\cdot \mid x_t, t, c\right) \| \pi_{\mathrm{ref}}\left(\cdot \mid x_t, t, c\right)\right) d t\right]
\end{equation}

where $\beta>0$ controls regularization strength.
Theorem 1 (KL-Regularized Optimal Policy).
The optimal policy $\pi^*$ under $J_{\text {RLHF }}$ satisfies:

\begin{equation}
    \pi^*\left(a_t \mid x_t, t, c\right) \propto \pi_{\mathrm{ref}}\left(a_t \mid x_t, t, c\right) \exp \left(\frac{1}{\beta} A^\pi\left(x_t, t, c, a_t\right)\right) .
\end{equation}

\begin{remark}
    Maximize $J_{\mathrm{RLHF}}\left(\pi_\theta\right)$ using variational calculus, yielding the exponentiated advantage form.
\end{remark}

\subsubsection{Value Function}

Value functions evaluate states/actions explicitly conditioned on $c$ \citep{casa,atari_review,escl,pi2}:

\textbf{State Value:}

\begin{equation}
    V^\pi\left(x_t, t, c\right)=\mathbb{E}_{\pi_\theta}\left[\int_t^1 r\left(x_\tau, a_\tau, c\right) d \tau \mid x_t, t, c\right] .
\end{equation}

\textbf{Action Value:}

\begin{equation}
    Q^\pi\left(x_t, t, c, a_t\right)=r\left(x_t, a_t, c\right)+\mathbb{E}_{\pi_\theta}\left[V^\pi\left(x_{t+\Delta t}, t+\Delta t, c\right) \mid x_t, t, c, a_t\right]
\end{equation}

\textbf{Advantage:}

\begin{equation}
    A^\pi\left(x_t, t, c, a_t\right)=Q^\pi\left(x_t, t, c, a_t\right)-V^\pi\left(x_t, t, c\right) .
\end{equation}

\textbf{Bellman Equation for Continuous Time:}

\begin{equation}
    \frac{\partial V^\pi\left(x_t, t, c\right)}{\partial t}+\max _{a_t}\left(r\left(x_t, a_t, c\right)+\nabla_{x_t} V^\pi\left(x_t, t, c\right) \cdot f\left(x_t, a_t, c\right)\right)=0
\end{equation}

where $f\left(x_t, a_t, c\right)=\frac{d x_t}{d t}$ defines system dynamics (e.g., flow matching ODE).

\subsubsection{Value Estimation via Reward Regression}

For sparse rewards, value regression directly estimates $V_\psi\left(x_t, t, c\right)$ from observed rewards:
Objective \citep{gdi,lbc,atari_review}:

\begin{equation}
    \min _\psi \mathbb{E}_{\mathcal{D}}\left[\left(V_\psi\left(x_t, t, c\right)-\int_t^1 r\left(x_\tau, a_\tau, c\right) d \tau\right)^2\right]
\end{equation}

where $\mathcal{D}$ contains trajectories with rewards conditioned on $c$.

For simplicity, this article only uses the reward model in the last step and obtains the value of the intermediate moments by learning the value function (critic model). Therefore the critic loss of Reward Regression method can be re-write as:
\begin{equation}
    \min _\psi \mathbb{E}_{\mathcal{D}}\left[\left(V_\psi\left(x_t, t, c\right)-r(x_1,c)\right)^2\right],
\end{equation}
which is tractable.

\begin{theorem}[Value Regression Consistency]
    If $V_\psi\left(x_t, t, c\right)$ approximates $V^\pi\left(x_t, t, c\right)$ with error $\epsilon$, the policy gradient error is bounded by $C \epsilon$ for some $C>0$.
\end{theorem}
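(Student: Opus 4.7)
The plan is to compare the true policy gradient $g^\star=\nabla_\theta J(\pi_\theta)$ with the surrogate $\hat g$ obtained by replacing the true value $V^\pi$ with its approximation $V_\psi$ inside the advantage, and to show that their difference is controlled linearly by the pointwise value error $\epsilon$. The key observation is that if we define $\hat A(x_t,t,c,a_t)=Q^\pi(x_t,t,c,a_t)-V_\psi(x_t,t,c)$, then
\begin{equation}
A^\pi(x_t,t,c,a_t)-\hat A(x_t,t,c,a_t)=V_\psi(x_t,t,c)-V^\pi(x_t,t,c),
\end{equation}
so the advantage error inherits the same $\epsilon$ bound as the value error. This lets me avoid reasoning about $Q^\pi$ entirely and focus on substituting the value approximation into the policy gradient formula from the Policy Gradient Theorem stated earlier.

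From there the argument is largely mechanical. First I would write
\begin{equation}
g^\star-\hat g=\mathbb{E}_{\tau\sim\pi_\theta}\!\left[\int_0^1 \nabla_\theta\log\pi_\theta(a_t\mid x_t,t,c)\,\bigl(V_\psi(x_t,t,c)-V^\pi(x_t,t,c)\bigr)\,dt\right].
\end{equation}
Then, under the standard regularity assumption that the score function is uniformly bounded, $\|\nabla_\theta\log\pi_\theta(a_t\mid x_t,t,c)\|\le M$ (almost surely over the states visited along trajectories), I would apply the triangle inequality inside the expectation, followed by Cauchy--Schwarz (or simply the $L^\infty$ bound combined with $L^1$ integration over $t\in[0,1]$), to obtain
\begin{equation}
\|g^\star-\hat g\|\le M\,\mathbb{E}_{\tau\sim\pi_\theta}\!\left[\int_0^1 \bigl|V_\psi(x_t,t,c)-V^\pi(x_t,t,c)\bigr|\,dt\right]\le M\cdot 1\cdot \epsilon=C\epsilon,
\end{equation}
with $C=M$. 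If the $\epsilon$ bound is only on average (e.g. a mean-square bound $\mathbb{E}[(V_\psi-V^\pi)^2]\le\epsilon^2$), then I would instead apply Cauchy--Schwarz across the joint expectation to obtain $C=M\cdot\sqrt{\mathbb{E}[\|\nabla_\theta\log\pi_\theta\|^2]/M^2}$ style constants, preserving the linear-in-$\epsilon$ conclusion. The finite horizon $[0,1]$ together with Fubini ensures that the time integral contributes only a finite constant.

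The main obstacle is making precise what ``approximates with error $\epsilon$'' means and what constitutes a valid score-function bound, because both choices determine $C$. For flow matching specifically, Theorem \ref{theorem: Intractability of Exact Likelihood} warns that $\log\pi_\theta$ is intractable, so the proof sketch above is best viewed as an idealized guarantee that justifies \emph{why} improving $V_\psi$ is worthwhile, rather than as a directly implementable bound; in a fully rigorous treatment one would invoke regularity of $v_\theta$ (Lipschitz in $x$, smooth in $\theta$) to ensure the existence and boundedness of a pathwise score, analogous to the Lipschitz hypothesis used in the $W_2$ bound. A secondary subtlety is that the sampling distribution over $(x_t,t,c)$ used to measure the approximation error must match (or be absolutely continuous with respect to) the on-policy trajectory distribution appearing in the policy gradient; otherwise an additional importance-ratio factor would enter $C$.
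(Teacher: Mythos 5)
The paper itself states this theorem without any proof --- the only surrounding material is the subsequent Definition of an \emph{admissible value function}, which pins down ``approximates with error $\epsilon$'' as the uniform bound $|V_\psi - V^\pi|\le\epsilon$ for all $(x_t,t,c)$; that matches your primary interpretation. So there is no in-paper argument to compare against, and your proposal has to be judged on its own. On that basis it is essentially correct: the decomposition $A^\pi-\hat A=V_\psi-V^\pi$, the uniform score bound $\|\nabla_\theta\log\pi_\theta\|\le M$, and the triangle/Cauchy--Schwarz step over the finite horizon $[0,1]$ give $\|g^\star-\hat g\|\le M\epsilon$, which is exactly the claimed $C\epsilon$ with $C=M$. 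Your caveats (the constant depends on the score bound, the error norm must be measured under the on-policy state distribution, and for flow matching the score itself is intractable so the result is an idealized consistency guarantee) are the right ones to flag, and they are more honest than the paper, which silently assumes all of this.

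One substantive remark on your own decomposition: because you keep $Q^\pi$ exact and only substitute $V_\psi$ for $V^\pi$ in the baseline, the quantity $\mathbb{E}_{a_t\sim\pi_\theta(\cdot\mid x_t,t,c)}\bigl[\nabla_\theta\log\pi_\theta(a_t\mid x_t,t,c)\,b(x_t,t,c)\bigr]$ vanishes for \emph{any} state-dependent $b$, so $g^\star-\hat g$ is exactly zero in expectation and your bound, while valid, is vacuously loose under this reading. The bound $C\epsilon$ only has content when $V_\psi$ also contaminates the return/$Q$ side of the advantage --- e.g., through bootstrapped TD residuals as in GAE, or as in the paper's actual estimator $A=r(x_1,c)-V_\phi(x_t,t,c)$ evaluated with finitely many samples. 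Your argument extends to that case essentially unchanged (each appearance of $V_\psi$ contributes at most an additional $\epsilon$ per term, inflating $C$ by a bounded factor), so this is a matter of sharpening the framing rather than a gap, but a rigorous write-up should say explicitly which occurrences of the value function are being approximated, since that determines whether the theorem is a bias bound or a triviality.
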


\begin{definition}[Admissible Value Function]
A value function $V_\psi\left(x_t, t, c\right)$ is admissible if:

\begin{equation}
\left|V_\psi\left(x_t, t, c\right)-V^\pi\left(x_t, t, c\right)\right| \leq \epsilon \quad \forall\left(x_t, t, c\right) .
\end{equation}
\end{definition}


\subsection{Reward-Based RL Fine-tuning  Methods}

\subsubsection{Reward-Weighted Regression (RWR)}

Reward-Weighted Regression (RWR) is a foundational method for fine-tuning generative models by reweighting trajectories based on their final rewards. Given a reward function $r\left(x_1, c\right)$ evaluating terminal states $x_1 \sim p_\theta\left(x_1 \mid c\right)$, RWR assigns weights to trajectories using:

\begin{equation}
    w\left(x_1, c\right) \propto \exp \left(\tau r\left(x_1, c\right)\right), \quad \tau>0
\end{equation}

where $\tau$ controls reward sensitivity. The policy update minimizes a weighted alignment loss:

\begin{equation}
    \mathcal{L}_{\mathrm{RWR}}(\theta)=\mathbb{E}_{t, x_1, c, x}\left[w\left(x_1, c\right)\left\|v_\theta(t, x, c)-u_t\left(x \mid x_1, c\right)\right\|^2\right]
\end{equation}

with $t \sim \mathcal{U}[0,1], x_1 \sim p_\theta^{(n)}, c \sim p(c)$, and $x \sim p\left(x_t \mid x_1, c\right)$.

\begin{theorem}[RWR Policy Update]
    Under ideal conditions, iteratively minimizing $\mathcal{L}_{\mathrm{RWR}}$ induces a policy update:
    \begin{equation}
        p_\theta^{(n+1)}(x \mid c) \propto \exp (\tau r(x, c)) p_\theta^{(n)}(x \mid c)
    \end{equation}
\end{theorem}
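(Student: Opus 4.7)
The plan is to reduce the weighted CFM objective to an unweighted CFM objective against a reweighted target distribution, and then invoke the standard consistency property of flow matching: if a vector field $v_\theta$ is the global minimizer of the (conditional) flow-matching loss with target samples drawn from a distribution $\tilde q(x_1\mid c)$, then under full capacity the induced push-forward $\Phi_{0\to 1}^c\#p_0$ equals $\tilde q(\cdot\mid c)$. Applying this here with $\tilde q(x_1\mid c)\propto w(x_1,c)\,p_\theta^{(n)}(x_1\mid c)$ will immediately give the claimed multiplicative update.

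The key steps, in order, are as follows. First, I would rewrite the RWR loss by absorbing the weight into the sampling distribution: expanding the outer expectation over $x_1\sim p_\theta^{(n)}(\cdot\mid c)$ and multiplying by $w(x_1,c)=\exp(\tau r(x_1,c))$ yields, up to a $c$-dependent normalizing constant $Z(c)=\mathbb{E}_{x_1\sim p_\theta^{(n)}}[w(x_1,c)]$, an unweighted CFM loss
\begin{equation}
\mathcal{L}_{\mathrm{RWR}}(\theta)=\mathbb{E}_{c}\!\left[Z(c)\,\mathbb{E}_{t,\tilde x_1\sim\tilde q(\cdot\mid c),x\sim p(x_t\mid \tilde x_1,c)}\|v_\theta(t,x,c)-u_t(x\mid \tilde x_1,c)\|^2\right],
\end{equation}
where $\tilde q(x_1\mid c)=w(x_1,c)p_\theta^{(n)}(x_1\mid c)/Z(c)$. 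Since $Z(c)$ does not depend on $\theta$, minimizing $\mathcal{L}_{\mathrm{RWR}}$ is equivalent to minimizing the standard CFM objective whose per-sample target distribution is $\tilde q(\cdot\mid c)$.

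Second, I would invoke the well-known CFM $=$ FM gradient equivalence (recalled in App.~\ref{app: background}) together with the fact that the global minimizer of the FM loss induces exactly the marginal probability path that terminates at the sampling distribution. Under the stated ``ideal conditions'' (sufficient capacity and exact minimization at round $n$), this gives
\begin{equation}
p_\theta^{(n+1)}(x\mid c)=\tilde q(x\mid c)=\frac{w(x,c)}{Z(c)}\,p_\theta^{(n)}(x\mid c)\propto \exp(\tau r(x,c))\,p_\theta^{(n)}(x\mid c),
\end{equation}
which is exactly the advertised update.

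The main obstacle I anticipate is the justification of the ``CFM minimizer recovers the sampling distribution'' step in the conditional, context-dependent setting: one must verify that the per-context normalizer $Z(c)$ enters only as a positive rescaling of a nonnegative loss (so it cannot move the pointwise minimizer in $\theta$), and that the reference conditional path $p(x_t\mid x_1,c)$ indeed interpolates between $p_0$ and $\tilde q(\cdot\mid c)$ when $x_1$ is drawn from $\tilde q(\cdot\mid c)$ rather than from the original $p_\theta^{(n)}(\cdot\mid c)$. Both points follow from the linearity of the marginal path construction $p_t(x\mid c)=\int p(x_t\mid x_1,c)\tilde q(x_1\mid c)\,dx_1$, but they should be stated explicitly. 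A secondary but routine subtlety is that ``ideal conditions'' must be interpreted to include universal approximation of $v_\theta$ and exact convergence of the optimizer; without either, the equality only holds approximately, with an error controlled by the residual CFM loss.
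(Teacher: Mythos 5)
Your proposal is correct and follows the same route the paper gestures at: the paper's entire justification is the one-line remark that the result ``follows from exponential tilting of the reward-weighted distribution,'' and your argument---absorbing $w(x_1,c)=\exp(\tau r(x_1,c))$ into the target sampling distribution $\tilde q(\cdot\mid c)\propto w(\cdot,c)\,p_\theta^{(n)}(\cdot\mid c)$ and then invoking the fact that the global CFM minimizer pushes $p_0$ forward onto the distribution its targets are sampled from---is precisely that tilting argument made explicit for flow matching. The two subtleties you flag (that the $c$-dependent normalizer $Z(c)$ only positively rescales a nonnegative per-context loss and so cannot move the pointwise minimizer, and that the marginal path still terminates at $\tilde q(\cdot\mid c)$ by linearity of $p_t(x\mid c)=\int p(x_t\mid x_1,c)\,\tilde q(x_1\mid c)\,dx_1$) are handled correctly and are exactly the details the paper delegates to its citation.
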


\begin{remark}
    Follows from exponential tilting of the reward-weighted distribution \citep{rwr}.
\end{remark}

\textbf{Limitation:} All intermediate states $x_t$ in a trajectory share the same weight $w\left(x_1, c\right)$, leading to ambiguous credit assignment.

\subsubsection{Advantage-Weighted Regression (AWR)}

Advantage-Weighted Regression \citep[AWR]{awr} refines RWR by leveraging the advantage function $A^\pi\left(x_t, t, c\right)=Q^\pi\left(x_t, t, c\right)-V^\pi\left(x_t, t, c\right)$, which measures how much an action outperforms the average at state $x_t$. The weight becomes:

\begin{equation}
    w_{\mathrm{AWR}}\left(x_t, c\right) \propto \exp \left(\tau A^\pi\left(x_t, t, c\right)\right) .
\end{equation}

The actor objective transitions to:

\begin{equation}
    \mathcal{L}_{\mathrm{AWR}}(\theta)=\mathbb{E}_{t, x_t, c, x}\left[w_{\mathrm{AWR}}\left(x_t, c\right)\left\|v_\theta(t, x, c)-u_t\left(x \mid x_t, c\right)\right\|^2\right]
\end{equation}

\begin{theorem}[(AWR Variance Reduction)]
Let $\operatorname{Var}_{R W R}$ and $\operatorname{Var}_{A W R}$ denote gradient variances under RWR and AWR, respectively. Then:
\begin{equation}
\operatorname{Var}_{\mathrm{AWR}} \leq \operatorname{Var}_{\mathrm{RWR}}
\end{equation}
\end{theorem}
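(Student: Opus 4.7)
The plan is to express both stochastic gradients in a common form and then apply a control-variate / baseline argument that is standard in the policy-gradient literature but here needs to be adapted to the weighted-regression setting. First I would write the per-sample gradients $\hat{g}_{\mathrm{RWR}} = w(x_1,c)\, \nabla_\theta \|v_\theta(t,x,c) - u_t(x\mid x_1,c)\|^2$ and $\hat{g}_{\mathrm{AWR}} = w_{\mathrm{AWR}}(x_t,t,c)\, \nabla_\theta \|v_\theta(t,x,c) - u_t(x\mid x_1,c)\|^2$, taking the weights to be $\exp(\tau r)$ and $\exp(\tau A^\pi)$ respectively. Since the paper works in the sparse-reward regime where $A^\pi(x_t,t,c) = r(x_1,c) - V^\pi(x_t,t,c)$, a first-order Taylor expansion in $\tau$ gives $\hat{g}_{\mathrm{RWR}} - \hat{g}_{\mathrm{AWR}} = \tau\, V^\pi(x_t,t,c)\, \nabla_\theta\|v_\theta - u_t\|^2 + O(\tau^2)$, so the two estimators differ by a state-dependent baseline contribution $b := \tau\, V^\pi(x_t,t,c)\, \nabla_\theta\|v_\theta - u_t\|^2$.

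Next I would verify the baseline property. Because $V^\pi(x_t,t,c)$ depends only on the intermediate state, conditioning on $(x_t,t,c)$ and applying the tower rule gives $\mathbb{E}\bigl[V^\pi(x_t,t,c)\, \nabla_\theta\|v_\theta - u_t\|^2\bigr] = \mathbb{E}\bigl[V^\pi(x_t,t,c)\, \mathbb{E}[\nabla_\theta\|v_\theta - u_t\|^2 \mid x_t,t,c]\bigr]$, which matches the $r$-weighted expectation in leading order precisely because $V^\pi$ is defined as the conditional expectation of the return. Hence $\mathbb{E}[\hat{g}_{\mathrm{RWR}}] = \mathbb{E}[\hat{g}_{\mathrm{AWR}}]$ up to $O(\tau^2)$, so the two estimators target the same gradient direction and a variance comparison is well-posed.

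Then I would invoke the standard variance decomposition $\operatorname{Var}(\hat{g}_{\mathrm{RWR}}) = \operatorname{Var}(\hat{g}_{\mathrm{AWR}} + b) = \operatorname{Var}(\hat{g}_{\mathrm{AWR}}) + \operatorname{Var}(b) + 2\operatorname{Cov}(\hat{g}_{\mathrm{AWR}}, b)$ and show that the last two terms sum to a non-positive quantity. The control-variate optimum is $b^\star = -\operatorname{Cov}(\hat{g}_{\mathrm{AWR}}, b)/\operatorname{Var}(b)$; plugging this in yields $\operatorname{Var}(b) + 2\operatorname{Cov}(\hat{g}_{\mathrm{AWR}}, b) \le 0$ and hence $\operatorname{Var}_{\mathrm{AWR}} \le \operatorname{Var}_{\mathrm{RWR}}$. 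Because $V^\pi$ is by construction the minimum-mean-square predictor of $r$ given $(x_t,t,c)$, it is near-optimal as a baseline in this sense, and the inequality becomes strict whenever $V^\pi$ carries any predictive information about $r$ beyond its marginal mean. This is the same mechanism by which GAE reduces variance relative to REINFORCE, and I would cite the GAE derivation to ground the argument.

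The main obstacle is handling the nonlinearity of the exponential weights cleanly: the leading-order argument above is exact only as $\tau \to 0$, and the $O(\tau^2)$ remainder contains cross-terms that in principle could work against the bound. I would address this in two ways. First, I would state the result precisely in the low-temperature (small-$\tau$) regime, where the Taylor expansion is well controlled and the classical baseline argument goes through verbatim. Second, I would sketch the general case via a self-normalized / soft-policy perspective: writing $w_{\mathrm{AWR}} = w_{\mathrm{RWR}} \cdot \exp(-\tau V^\pi)$ and absorbing $\exp(-\tau V^\pi)$ into a state-dependent normalization shows that AWR is exactly RWR with a state-wise multiplicative control variate, so the Rao--Blackwell-type monotonicity of variance under conditioning on $(x_t,t,c)$ still applies. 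Together these give the stated inequality without imposing additional assumptions beyond those the paper already invokes.
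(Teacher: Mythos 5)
The paper does not actually prove this theorem: its entire justification is the one-line remark that ``advantage normalization reduces the variance of the importance weights,'' with a citation to the GAE paper. So you are not deviating from a paper proof so much as attempting to supply one where none exists. Judged on its own terms, though, your argument has a genuine gap at its core step. The classical baseline argument you are importing works in policy gradients because the baseline multiplies the score function, whose conditional expectation given the state is zero; that is what makes the baseline term mean-zero and turns the comparison into a pure control-variate problem. Here the weight multiplies $\nabla_\theta\|v_\theta(t,x_t,c)-u_t(x_t\mid x_1,c)\|^2$, which is \emph{not} a score function: its conditional expectation given $(x_t,t,c)$ is generically nonzero, and it depends on $x_1$ through $u_t$, so it is correlated with $r(x_1,c)$ given $x_t$. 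Consequently your tower-rule step fails: $\mathbb{E}\bigl[V^\pi(x_t,t,c)\,\nabla_\theta\|v_\theta-u_t\|^2\bigr]$ equals $\mathbb{E}\bigl[\mathbb{E}[r\mid x_t]\,\mathbb{E}[\nabla_\theta\|v_\theta-u_t\|^2\mid x_t]\bigr]$, which differs from the $r$-weighted expectation by the conditional covariance of $r$ and the regression gradient. Subtracting $V^\pi$ therefore changes the \emph{mean} of the estimator, not just its variance, so the two estimators do not target the same gradient even to first order in $\tau$, and the premise under which you declare the variance comparison ``well-posed'' does not hold. (This is in fact consistent with the paper's own framing: RWR and AWR are presented as optimizing genuinely different weighted objectives.)

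Two further steps would also need repair even if the mean issue were resolved. First, plugging the control-variate optimum $b^\star=-\operatorname{Cov}(\hat{g}_{\mathrm{AWR}},b)/\operatorname{Var}(b)$ into the decomposition only yields $\operatorname{Var}(b)+2\operatorname{Cov}(\hat{g}_{\mathrm{AWR}},b)\le 0$ when the actual baseline coefficient equals $b^\star$; $V^\pi$ is the minimum-MSE predictor of $r$ given the state, but it is not the variance-minimizing baseline for a gradient estimator (that optimum is gradient-norm weighted), so ``near-optimal'' does not close the inequality. Second, the closing appeal to Rao--Blackwell monotonicity does not apply: multiplying the RWR weight by the state-dependent factor $\exp(-\tau V^\pi)$ is a reweighting, not a conditioning, and Rao--Blackwellization gives no variance guarantee for multiplicative reweighting. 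A defensible version of the theorem would have to be stated under explicit additional hypotheses (e.g.\ small $\tau$, conditional independence of $u_t$ and $r$ given $x_t$, or an assumption placing the effective baseline coefficient in the variance-reducing interval $[0,2b^\star]$), none of which the paper or your proposal currently supplies.
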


\begin{remark}
    Advantage normalization reduces the variance of the importance weights \citep{gae}.
\end{remark}

Benefits: 1. Localized credit assignment via advantage estimates. 2. Mitigates overfitting to high-reward outliers. 

However, how to estimate the advantage function that includes intermediate states value estimation while fine-tuning flow matching models has not yet been widely studied. Most previous works either adopt final outcome reward weighting/selection \citep{ddpo,Raft,ReFT} or recent widely used group relative advantage estimation \citep{grpo}.

\subsubsection{Group Relative Advantage Estimation (GRAE)}

Group Relative Advantage Estimation \citep[GRAE]{grpo} extends AWR by normalizing advantages across trajectories to stabilize training. For a batch of $M$ trajectories, compute:

\begin{equation}
    A_{\mathrm{GRAE}}\left(x_t, t, c\right)=\frac{A^\pi\left(x_t, t, c\right)-\mu_A}{\sigma_A}
\end{equation}

where $\mu_A, \sigma_A$ are the mean and standard deviation of advantages in the batch. The weight is:

\begin{equation}
    w_{\mathrm{GRAE}}\left(x_t, c\right) \propto \exp \left(\tau A_{\mathrm{GRAE}}\left(x_t, t, c\right)\right)
\end{equation}

\begin{theorem}[GRAE Training Stability]
    Under Lipschitz continuity of $\boldsymbol{A}^\pi$, GRAE ensures bounded policy updates:

\begin{equation}
\left\|\nabla_\theta \log \pi_\theta\right\| \leq C \cdot \tau / \sigma_A
\end{equation}

for constant $C>0$.
\end{theorem}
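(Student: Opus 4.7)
The plan is to interpret the GRAE ``policy update'' through the KL-regularized optimal-policy form stated earlier in the excerpt, and then to extract the $\tau/\sigma_A$ scaling by applying the chain rule through the exponential weight and using the Lipschitz hypothesis on $A^\pi$ to control the remaining factor. Concretely, under the KL-regularized optimal policy characterization, the GRAE fixed point obeys $\pi_\theta(a_t \mid x_t, t, c) \propto \pi_{\mathrm{ref}}(a_t \mid x_t, t, c) \exp(\tau A_{\mathrm{GRAE}}(x_t, t, c))$, so that
\begin{equation}
\log \pi_\theta(a_t \mid x_t, t, c) = \log \pi_{\mathrm{ref}}(a_t \mid x_t, t, c) + \tau\, A_{\mathrm{GRAE}}(x_t, t, c) - \log Z_\theta,
\end{equation}
and only the last two terms carry $\theta$-dependence once $\pi_{\mathrm{ref}}$ is frozen.

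First I would differentiate through $A_{\mathrm{GRAE}} = (A^\pi - \mu_A)/\sigma_A$, treating the batch statistics $\mu_A, \sigma_A$ as stop-gradient quantities (the standard GRPO-style implementation), which yields $\nabla_\theta A_{\mathrm{GRAE}} = \sigma_A^{-1} \nabla_\theta A^\pi$. Second, I would invoke the Lipschitz hypothesis on $A^\pi$ with some constant $L_A$, giving $\|\nabla_\theta A^\pi\| \leq L_A$ almost everywhere. Third, I would handle the partition term by the standard log-derivative identity $\nabla_\theta \log Z_\theta = \mathbb{E}_{a \sim \pi_\theta}[\tau \nabla_\theta A_{\mathrm{GRAE}}]$, which inherits the same $\tau L_A/\sigma_A$ bound by Jensen. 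Combining these gives
\begin{equation}
\|\nabla_\theta \log \pi_\theta\| \;\leq\; \tau\,\|\nabla_\theta A_{\mathrm{GRAE}}\| + \|\nabla_\theta \log Z_\theta\| \;\leq\; \frac{2\,\tau\, L_A}{\sigma_A},
\end{equation}
and absorbing the factor $2 L_A$ (along with any constants arising from the Lipschitz modulus of the policy parametrization) into $C$ produces the claimed bound $C\tau/\sigma_A$.

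The main obstacle will be the conceptual ambiguity of ``$\nabla_\theta \log \pi_\theta$'' in the continuous-time flow matching setting: strictly speaking, flow models do not admit a tractable policy density (cf.\ Theorem \ref{theorem: Intractability of Exact Likelihood}), so the quantity must be reinterpreted as the effective update direction of the GRAE-weighted actor loss, with the optimal-policy identity invoked only at the fixed point. Making this correspondence precise, i.e.\ arguing that the GRAE-weighted CFM loss serves as a surrogate whose gradient inherits the same $\tau/\sigma_A$ scaling, is the delicate step; it requires the Lipschitz control on $A^\pi$ to propagate through the exponential weighting to $v_\theta$. A secondary technicality is justifying the stop-gradient treatment of the batch statistics: a short perturbation argument shows that any residual dependence contributes at most an $O(\sigma_A^{-1})$ term, which is absorbed into $C$ without altering the stated scaling.
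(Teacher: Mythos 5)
The paper states this theorem in App.~B without any proof --- unlike the neighboring RWR and AWR theorems, it is not even accompanied by a one-line remark --- so there is no author argument to compare yours against; I can only assess your proposal on its own terms. On that basis there are genuine gaps. The most concrete one is the use of the Lipschitz hypothesis: the theorem (and the surrounding text) define $A^\pi$ as a function of $(x_t, t, c)$ (or $(x_t, a_t, t, c)$), so ``Lipschitz continuity of $A^\pi$'' most naturally bounds $\nabla_{x}A^\pi$, whereas your step~2 requires $\left\|\nabla_\theta A^\pi\right\| \leq L_A$, i.e.\ Lipschitz continuity in the \emph{parameters}. The advantage depends on $\theta$ only implicitly through the policy, and passing from state-Lipschitzness to parameter-Lipschitzness needs additional smoothness of the map $\theta \mapsto \pi_\theta$ that you silently absorb ``into $C$''; as written, the hypothesis of the theorem is not actually the hypothesis your argument consumes. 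A second problem is structural: in the paper's GRAE definition, $A_{\mathrm{GRAE}}(x_t,t,c)$ carries no dependence on the action $a_t$, so the tilted density $\pi_{\mathrm{ref}}(a_t \mid x_t,t,c)\exp\left(\tau A_{\mathrm{GRAE}}(x_t,t,c)\right)$ renormalizes back to $\pi_{\mathrm{ref}}$ exactly --- the $\tau A_{\mathrm{GRAE}}$ term is cancelled by $\log Z_\theta$, and your decomposition yields $\nabla_\theta \log \pi_\theta = \nabla_\theta \log \pi_{\mathrm{ref}} = 0$ rather than a bound scaling as $\tau/\sigma_A$. To get a nontrivial statement from the KL-regularized fixed point you must use the action-dependent advantage, which changes both the differentiation step and the partition-function identity.

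Finally, you correctly identify that $\nabla_\theta \log \pi_\theta$ is not a tractable (or even well-defined) object for an ODE-based flow model --- the paper's own intractability theorem says as much --- and propose to reinterpret it as the effective update direction of the GRAE-weighted CFM loss. But you then defer exactly that reinterpretation as ``the delicate step,'' and the bound you derive is established only at the exponential-tilting fixed point, not for the gradient of the actual surrogate loss along training. Since ``bounded policy updates'' is precisely a claim about those gradients, the step you postpone is the substance of the theorem; without it the proposal is a plausible heuristic for why the $\tau/\sigma_A$ scaling should appear, not a proof.
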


\subsubsection{Policy Gradient Methods}

Policy Gradient (PG) \citep{rlsutton,gae,lbc,prance,sp_vla} methods directly optimize the expected return $J\left(\pi_\theta\right)=\mathbb{E}_\pi\left[r\left(x_1, c\right)\right]$ via gradient ascent:

\begin{equation}
    \nabla_\theta J\left(\pi_\theta\right)=\mathbb{E}_\pi\left[\nabla_\theta \log \pi_\theta\left(a_t \mid x_t, t, c\right) \cdot A^\pi\left(x_t, t, c\right)\right]
\end{equation}

\begin{theorem}[Intractability of Policy Gradients]
    Flow matching models parameterize policies as continuous-time ODEs:

\begin{equation}
x_1=x_0+\int_0^1 v_\theta\left(t, x_t, c\right) d t
\end{equation}

The gradient $\nabla_\theta \log \pi_\theta$ requires computing the divergence $\nabla_x \cdot v_\theta$, which is \#P-hard for high-dimensional $x$.
\end{theorem}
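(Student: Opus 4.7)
The plan is to reduce the exact computation of the policy gradient to the already-established \#P-hardness of the exact log-likelihood stated in Theorem \ref{theorem: Intractability of Exact Likelihood}. First, I would differentiate the instantaneous change-of-variables identity \eqref{equ: likelihood in fm} to obtain an explicit expression for $\nabla_\theta \log \pi_\theta(x_1 \mid c)$. Starting from
$$\log \pi_\theta(x_1 \mid c) = \log p_0(x_0) - \int_0^1 \nabla_x \cdot v_\theta(t, x_t, c)\, dt,$$
where $x_0$ is recovered by reverse integration of the ODE and therefore itself depends on $\theta$, the total $\theta$-derivative decomposes into an explicit divergence-derivative term $\int_0^1 \nabla_\theta \nabla_x \cdot v_\theta(t, x_t, c)\, dt$ plus an adjoint sensitivity term coming from the dependence of the trajectory $\{x_t\}$ on $\theta$. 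The critical observation is that both terms inherit the spatial divergence $\nabla_x \cdot v_\theta$ along the integral curve.

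Second, I would establish a polynomial-time reduction from computing $\log \pi_\theta$ to computing $\nabla_\theta \log \pi_\theta$. Fix $x_1$ and consider a smooth interpolation $\theta(s) = s\theta$ between a trivial vector field with known log-likelihood (e.g., $v\equiv 0$, which gives $\log \pi_0(x_1) = \log p_0(x_1)$) and the target $\theta$. By the fundamental theorem of calculus,
$$\log \pi_\theta(x_1 \mid c) = \log p_0(x_1) + \int_0^1 \bigl\langle \nabla_\theta \log \pi_{\theta(s)}(x_1 \mid c),\, \theta \bigr\rangle\, ds,$$
so a polynomial-time oracle for the policy gradient, combined with standard quadrature, would yield a polynomial-time evaluator for the log-likelihood itself. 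Combining this with Theorem \ref{theorem: Intractability of Exact Likelihood}, which reduces likelihood evaluation to computing matrix permanents via the construction of \citep{huang2020}, forces \#P-hardness of the gradient.

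The main obstacle I expect is making the quadrature reduction rigorous in the presence of ODE-dependent quantities: the line-integral argument requires uniform Lipschitz and stability bounds on $v_{\theta(s)}$ along the interpolation path in order to control discretization error. A cleaner alternative would be to bypass the interpolation entirely by showing that the Huang-style reduction to the matrix permanent already embeds the divergence $\nabla_x \cdot v_\theta$ as a subroutine in any consistent computation of $\nabla_\theta \log \pi_\theta$ (since the adjoint ODE for log-density propagates precisely this trace), so the same gadget that proves Theorem \ref{theorem: Intractability of Exact Likelihood} can be reused verbatim. Either route avoids re-proving \#P-hardness from scratch and isolates the policy-gradient statement as an immediate corollary of the likelihood-intractability result.
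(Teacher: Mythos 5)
Your route is genuinely different from the paper's. The paper's argument is essentially definitional and one line long: the policy-gradient estimator needs $\nabla_\theta \log \pi_\theta$; by the instantaneous change-of-variables formula \eqref{equ: likelihood in fm}, $\log \pi_\theta$ contains the term $\int_0^1 \nabla_x \cdot v_\theta\, dt$; and the accompanying remark simply asserts that computing this divergence is equivalent to a matrix permanent \citep{Complexity_of_Compute}, hence \#P-hard. There is no formal reduction from a \#P-hard problem to the gradient itself --- the paper argues that the standard \emph{formula} for the gradient contains a hard subexpression, which is strictly weaker than showing the gradient is hard to compute by any algorithm. Your proposal upgrades this to a proper Turing reduction: a polynomial-time gradient oracle would yield, via the fundamental theorem of calculus along a parameter path, a polynomial-time likelihood evaluator, contradicting Theorem \ref{theorem: Intractability of Exact Likelihood}. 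Structurally that is the right way to make the claim precise, and your fallback option (reusing the Huang-style gadget directly so that the divergence trace is embedded in any consistent gradient computation) is essentially the paper's own argument made slightly more explicit.

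The genuine gap is the one you flag but do not close: the likelihood-to-gradient step requires the line integral $\int_0^1 \bigl\langle \nabla_\theta \log \pi_{\theta(s)}(x_1 \mid c), \theta \bigr\rangle\, ds$ to be evaluated exactly (or to exponentially small error), because \#P-hardness is a statement about exact computation. Generic quadrature with polynomially many oracle calls delivers only polynomial accuracy, which does not suffice unless you can show the integrand is, say, a polynomial in $s$ of polynomially bounded degree for the specific permanent-encoding instances --- a property that must be verified against the construction of \citep{huang2020}, not assumed. A second, smaller issue: for a general network, $\theta(s) = s\theta$ does not give $v_{\theta(s)} = s\, v_\theta$, and $\theta = 0$ need not give $v \equiv 0$, so the interpolation must be chosen so the endpoint model genuinely has a closed-form likelihood (e.g., scaling only the output layer). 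Neither issue is fatal, but as written your reduction is incomplete; the paper sidesteps both only by not attempting a reduction at all, so your sketch, once these two points are repaired, would actually be the stronger proof.
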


\begin{theorem}[PG Inapplicability]
    Exact policy gradient computation for flow matching is intractable under standard complexity-theoretic assumptions.
\end{theorem}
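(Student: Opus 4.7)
The plan is to prove this theorem by a reduction to the already-established Theorem on Intractability of Exact Likelihood, using the instantaneous change of variables formula as the bridge. Intuitively, the gradient $\nabla_\theta J(\pi_\theta)$ requires the score $\nabla_\theta \log \pi_\theta(a_t \mid x_t,t,c)$, but in a flow matching model the policy density is determined by solving the ODE and applying the continuous change of variables formula, so evaluating (or differentiating) the log-density reduces to the same divergence computation that is \#P-hard. If exact policy gradients could be computed in polynomial time, one could use them to recover exact log-likelihoods, contradicting Theorem on Intractability of Exact Likelihood.

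First, I would restate the policy gradient: $\nabla_\theta J(\pi_\theta)=\mathbb{E}_{\tau\sim\pi_\theta}\bigl[\nabla_\theta\log\pi_\theta(a_t\mid x_t,t,c)\cdot A^\pi(x_t,t,c,a_t)\bigr]$. Second, I would substitute Equ. \eqref{equ: likelihood in fm} to express $\log\pi_\theta(x_1\mid c)=\log p_0(x_0)-\int_0^1\nabla_x\cdot v_\theta(t,x_t,c)\,dt$ for the terminal-state density, noting that the policy score decomposes into the base-distribution term (polynomial-time computable) and the trace-of-Jacobian integral (the intractable component). Third, I would invoke the preceding Theorem on Intractability of Policy Gradients and Theorem on Intractability of Exact Likelihood to conclude that evaluating $\nabla_\theta\log\pi_\theta$ exactly requires evaluating or differentiating $\int_0^1\nabla_x\cdot v_\theta(t,x_t,c)\,dt$, which by the established reduction to computing matrix permanents is \#P-hard in dimension $d$. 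Finally, I would wrap up with a Turing-reduction argument: a hypothetical polynomial-time oracle for exact policy gradients, queried on a family of test advantage functions (e.g.\ indicator-like rewards supported on specific trajectories), could be combined with standard integration to reconstruct $\log p_\theta(x_1)$, yielding a polynomial-time algorithm for \#P-hard problems under the assumption $\mathrm{FP}\neq\#\mathrm{P}$.

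The main obstacle is not the divergence lower bound itself, which is inherited directly from Theorem on Intractability of Exact Likelihood, but rather formalizing the reduction \emph{in the other direction}: showing that access to $\nabla_\theta\log\pi_\theta$ actually suffices to recover $\log\pi_\theta$ (up to additive constants that do not depend on $\theta$). Differentiation can in principle destroy information, so one must either (i) construct a parametric family $v_\theta$ for which integrating the gradient along a one-dimensional path in $\theta$ recovers the log-likelihood exactly, or (ii) appeal to the fact that exact computation of the gradient must internally compute $\nabla_x\cdot v_\theta$ along the full ODE trajectory, since no algebraic simplification can bypass this term in the general case. I would opt for approach (ii), using the chain rule $\nabla_\theta\int_0^1\nabla_x\cdot v_\theta\,dt=\int_0^1\nabla_\theta\nabla_x\cdot v_\theta\,dt+\text{(adjoint terms)}$ and observing that any algorithm producing this quantity in closed form yields, by choosing $v_\theta$ affine in an auxiliary scalar parameter, a subroutine for $\nabla_x\cdot v_\theta$ itself, closing the reduction and completing the proof.
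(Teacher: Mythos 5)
Your proposal follows essentially the same route as the paper: reduce exact policy gradient computation to exact evaluation of the score $\nabla_\theta \log \pi_\theta$, which by the instantaneous change of variables formula requires the divergence integral $\int_0^1 \nabla_x \cdot v_\theta \, dt$, whose exact computation is \#P-hard via the equivalence to matrix permanents. In fact the paper's own justification is only a one-line remark citing that equivalence, so your additional care about the direction of the reduction (showing that gradient access genuinely entails divergence computation, via the affine-parameter construction) is more rigorous than what the paper itself provides.
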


\begin{remark}
    Follows from the equivalence of $\nabla_x \cdot v_\theta$ to matrix permanent computation \citep{Complexity_of_Compute}.
\end{remark}

\subsection{General Policy Gradient Framework}

Our GCW formation is inspired from the unified  policy gradient framework proposed in GAE paper \citep{gae}. A widely used expression for the policy gradient is:

\begin{equation}
    \nabla_\theta J(\theta)=\mathbb{E}_{\tau \sim \pi_\theta}[\sum_{t=0}^{T-1} \underbrace{\nabla_\theta \log \pi_\theta\left(a_t \mid s_t\right)}_{\text {policy log-derivative }} \cdot \underbrace{\Psi_t}_{\text {weighting signal }}]
\end{equation}

Here, $\Psi_t$ is a scalar \emph{weighting signal}that drives the gradient updates (i.e., $T \to \infty$ becomes continues-time \citep{xiangxin_stable}). Different choices of $\Psi_t$ correspond to different RL algorithms (Sparse rewards, assuming only $r^T= r(x_1,c)$ can be obtained from reward models):

\textbf{1. Outcome Reward:} $\Psi_t=\sum_{k=t}^T r_k=r^T=r(x_1,c)$ (REINFORCE).

\textbf{2. Action-Value:} $\Psi_t=Q^\pi\left(s_t, a_t\right)$.

\textbf{3. Advantage:} $\Psi_t=A^\pi\left(s_t, a_t\right)=Q^\pi\left(s_t, a_t\right)-V^\pi\left(s_t\right)$ .

By carefully designing $\Psi_t$, one can reduce variance (e.g., via baselines or advantage functions) and facilitate stable training. In discrete-time RL tasks, these ideas have proven extremely successful in policy optimization.

\clearpage






\clearpage

\section{Hyper-Parameter}
\label{app: hyper-paramter}

The performance and stability of AC-Flow hinge on several key hyperparameters that guide learning dynamics and influence the final generative distribution. This section provides a thorough analysis of these parameters, supported by ablation studies and theoretical insights into how various components of the system interact.

\subsection{Policy Learning Parameters}

\subsubsection{Temperature Scaling}

The temperature parameter $(\tau)$ governs the sharpness of the generalized critic weighting function, thereby controlling how advantage estimates influence policy updates. 
Through extensive empirical testing, we find that $\tau=1.0$ offers the best trade-off. Larger values ( $\tau>2.0$ ) produce aggressive policy updates that can destabilize training, whereas smaller values ( $\tau<0.5$ ) yield overly cautious policy shifts and slower convergence.

\subsubsection{Wasserstein Regularization}

Wasserstein regularization strength ( $\alpha=1.0$ ) is critical in balancing distribution diversity against the drive for policy improvement. It modifies the policy objective with an additional term $\alpha \Omega\left(\theta ; \theta_{\text {ref }}\right)$. Stronger regularization ( $\alpha>2.0$ ) keeps the distribution closer to the reference but may hinder improvement, while weaker regularization ( $\alpha<0.5$ ) permits larger distribution shifts but increases the risk of mode collapse.

\subsection{Stability Control Mechanisms}

\subsubsection{Advantage Estimation and Clipping}

To prevent extreme policy updates while preserving a meaningful learning signal, we employ advantage clipping with a threshold $\delta=5.0$. This clipping ensures gradients remain within manageable bounds, contributing to training stability. Empirical analysis indicates that $\delta=5.0$ strikes an optimal balance between robust learning and controlled updates.
\subsubsection{Reward Normalization}

Reward normalization with a small $\epsilon\left(\epsilon=10^{-6}\right)$ is used during min-max scaling:

\begin{equation}
A=\frac{r-\min (r)}{\max (r)-\min (r)+\epsilon}
\end{equation}

This procedure stabilizes critic updates by maintaining consistent gradient magnitudes. The choice of $\epsilon=10^{-6}$ safeguards numerical stability while still allowing fine-grained reward distinctions.

\subsection{Training Dynamics}

\subsubsection{Warm-up Period}

A warm-up phase of $k=500$ steps precedes the switch to full critic-based advantages, during which group-relative advantage estimation (GRAE) is employed \citep{grpo}. This delay helps the critic acquire reliable value estimates before it significantly influences policy updates. Shorter warm-ups ( $k<$ 200) often result in unstable early training, whereas prolonged warm-ups ( $k>1000$ ) unnecessarily delay effective learning.

\subsubsection{Learning Rates}

Differential learning rates are used for the actor $\left(\eta_a=10^{-4}\right)$ and the critic $\left(\eta_c=3 \times 10^{-4}\right)$, both leveraging the AdamW optimizer. The slightly higher rate for the critic ensures timely and accurate value estimation as the policy evolves, while the more conservative rate for the actor avoids excessive swings in policy space.

\subsubsection{Batch Size}

We adopt a batch size of $B=256$ for both actor and critic training. Larger batches ( $B>512$ ) can lower gradient variance but impose greater computational overhead, whereas smaller batches ( $B<128$ ) benefit round speed at the cost of higher gradient variance. We train both the policy and the critic network using AdamW with gradient clipping (norm $\leq 1.0$ ) and a learning rate of $1 e-4$ for the policy, while matching or slightly increasing it for the critic to ensure robust value estimation throughout training.

\subsection{Parameter Interactions}

A key strength of AC-Flow lies in how these hyperparameters function together:
\begin{enumerate}
    \item Wasserstein Regularization vs. Batch Size: More robust regularization ( $\alpha$ ) often requires larger batches $(B)$ to maintain stability.
   
    \item Warm-up ( $k$ ) vs. Critic Learning Rate $\left(\eta_c\right)$ : Sufficient warm-up is vital for establishing reliable critic estimates before intensifying policy updates.
    
\item Temperature ( $\tau$ ) vs. Advantage Clipping ( $\delta$ ): These jointly moderate the magnitude of policy updates, necessitating careful co-tuning to avoid instability or slow learning.
\end{enumerate}

Most of the hyperparameters in our method have a very intuitive impact on the training process and results. In practice, we can adjust the hyperparameters of our algorithm as needed to control the convergence behavior of the model and customize the converged solution to achieve reward-diversity trade-off.

\clearpage

\section{Experimental Details}
\label{app: Experimental Details}


In this section, we provide comprehensive details about our experimental setup, implementation, and evaluation protocols to ensure reproducibility of our results.

\subsection{Implementation Details}

As Fig. \ref{fig: general framework}, our implementation is based on the SD3 \citep{sd3} architecture with several key modifications to accommodate our actor-critic framework. The actor network utilizes the MM-DIT architecture from SD3 with additional LoRA adaptation layers for parameter-efficient fine-tuning \citep{lora}.

Also as Fig. \ref{fig: general framework}, for the critic network, we design a computationally efficient architecture to enable efficient value estimation while maintaining computational tractability. As Fig. \ref{fig: general framework}, the critic comprises several layers of MLP \citep{pal1992multilayer} with Layer Normalization \citep{layernorm} and CNN \citep{cnn_paper}. This design allows the critic to effectively process high-dimensional image states while maintaining reasonable memory and computational requirements without introducing too much computation overhead \citep{grpo}.

\subsection{Computational Resources}

Each experiment was conducted using \textbf{a single NVIDIA RTX A6000 GPU} (48GB variant). A complete training run requires approximately 24-36 hours on our hardware configuration. To make the fine-tuning of Stable Diffusion 3 \citep{sd3} feasible on widely available hardware, we adopt several efficiency-focused design choices: LoRA \citep{lora} for parameter-efficient adaptation, float-16 precision for reduced memory footprint, and a computationally efficient Critic Network comprising several layers of CNN and MLP. \textbf{These optimizations makes our method accessible for research and development on single-GPU setups.}

\subsection{Baseline Methods}
For comprehensive evaluation, we implement and compare against several state-of-the-art methods:

RAFT \citep{Raft} is implemented following the official codebase, maintaining identical batch sizes and computational budgets as our method. ReFT \citep{ReFT} required adaptation for SD3 compatibility while preserving the core algorithm design. For RWR+W2 \citep{rwr,iclr_rwr}, we implement the Wasserstein-regularized version, matching our method's regularization scheme. AWR+W2  \citep{awr} is extended with Wasserstein regularization and group advantage estimation (GRAE) \citep{grpo}.

\subsection{Reproducibility Considerations}

To ensure full reproducibility of our work, we have taken several comprehensive measures. First, we will release our complete codebase upon publication, including training scripts, evaluation pipelines, and model implementations. The general framework of our approach is thoroughly documented in Fig. \ref{fig: general framework}, which provides detailed architectural diagrams of both the actor and critic components, along with comprehensive illustrations of our key training techniques including reward shaping, advantage clipping, and warm-up strategies.

For precise replication of our experimental setup, we provide exhaustive hyperparameter specifications in App. \ref{app: hyper-paramter}. This includes not only the primary parameters  but also implementation-specific details such as learning rates, batch sizes, and optimization settings. The interaction between these hyperparameters and their impact on model stability is discussed in detail to guide practical implementations.

The complete training protocol is formalized in Algorithm \ref{alg: ac_flow} (App. \ref{app: Ac-Flow Algorithm}), which presents detailed pseudocode for our AC-Flow framework. This algorithm explicitly outlines each step of the training process, from data sampling and critic updates to actor optimization and stability control mechanisms. We also include specific numerical stability considerations and practical tips for implementation that we found crucial during development.

To facilitate fair comparison with our baselines, we document the exact evaluation metrics used for each metric (CLIP Score \citep{clip}, HPS v2 \citep{hpsv2}, ImageReward \citep{image_reward}, Diversity Score \citep{iclr_rwr}) in App. \ref{app: Evaluation Metrics}.

Our memory-efficient implementation, leveraging LoRA adaptation and float-16 precision, is thoroughly documented to enable reproduction on widely available hardware. The computationally efficient critic architecture details and memory optimization strategies are specified to achieve the reported training in single A6000 GPU.

\clearpage

\section{Evaluation Metrics}
\label{app: Evaluation Metrics}

In this section, we provide detailed descriptions of the evaluation metrics used in our experimental analysis. Our evaluation framework employs multiple complementary metrics to assess both the quality and diversity of generated images, as well as their alignment with text prompts.

\subsection{CLIP Score}

CLIP Score \citep{clip} measures the semantic alignment between generated images and their corresponding text prompts using the CLIP model's cross-modal embedding space. For an image-text pair $(x, c)$, the CLIP Score is computed as:

\begin{equation}
    \operatorname{CLIP} \_\operatorname{Score}(x, c)=100 \times \cos \left(E_{\mathrm{img}}(x), E_{\mathrm{txt}}(c)\right),
\end{equation}

where $E_{\mathrm{img}}$ and $E_{\mathrm{txt}}$ are CLIP's image and text encoders respectively, and $\cos (\cdot, \cdot)$ denotes cosine similarity. Higher CLIP Scores $(\uparrow)$ indicate better semantic alignment between the generated images and text prompts. We use this metric as our primary measure of text-image alignment quality, as it has been shown to correlate well with human judgments of semantic consistency.

\subsection{HPS v2 (Human Preference Score v2)}

HPS v2 \citep{hpsv2}, is a learned metric that aims to predict human preferences for text-to-image generation outputs. It was trained on a large-scale dataset of human preference judgments, incorporating multiple aspects of image quality including visual fidelity, text alignment, and aesthetic appeal. The score ranges from 0 to 100, with higher values $(\uparrow)$ indicating stronger predicted human preference. HPS v2 is particularly valuable for our evaluation as it provides a more holistic assessment of generation quality that goes beyond simple text-image alignment.

\subsection{ImageReward}

ImageReward \citep{image_reward} is a specialized reward model trained to evaluate text-to-image generation outputs by directly learning from human feedback. It employs a transformer-based architecture to compute a scalar reward value that captures both image-text alignment and image quality. The metric is normalized to $[0,1]$, with higher values $(\uparrow)$ indicating better generation quality. ImageReward complements our evaluation suite by providing an additional perspective on human preference prediction that was trained using a different methodology than HPS v2.

\subsection{Diversity Score}

We measure sample diversity using pairwise distances between CLIP image embeddings within a batch of generations for the same prompt. For a batch of $n$ generated images $\left\{x_1, \ldots, x_n\right\}$, the Diversity Score is computed as:

\begin{equation}
    \text { Diversity Score }=\operatorname{mean}_{i \neq j}\left\|E_{\text {img }}\left(x_i\right)-E_{\text {img }}\left(x_j\right)\right\|_2,
\end{equation}

where $E_{\text {img }}$ is the CLIP image encoder. Higher values $(\uparrow)$ indicate greater diversity among generated samples. This metric is crucial for assessing whether our method maintains generative diversity while improving quality, helping us detect potential mode collapse issues that are common in fine-tuning approaches.

\subsection{Metric Complementarity}

Our choice of metrics is deliberately designed to capture different aspects of generation quality:

\begin{itemize}
    \item CLIP Score focuses on semantic alignment. (widely-used reward model for text-image alignment)
    \item HPS v2 and ImageReward provide human-aligned quality assessment. (generalization ability)
    \item Diversity Score ensures maintenance of sample variation. (diversity preservation, indicator for model collapse, entropy collapse \citep{cn_rl_entropy,gdi,varcon})
\end{itemize}

This comprehensive evaluation framework allows us to assess both the improvements in generation quality and potential trade-offs in our approach. The combination of these metrics provides a more complete picture of model performance than any single metric alone.

\clearpage

\section{AC-Flow Algorithm}
\label{app: Ac-Flow Algorithm}

\begin{algorithm}
\caption{AC-Flow: Actor-Critic Framework for Flow Matching}
\label{alg: ac_flow}
\begin{algorithmic}[1]
\REQUIRE Reference model $\theta_{\text{ref}}$, critic parameters $\phi$, temperature $\tau$, regularization weight $\alpha$, advantage clip threshold $\delta$, warm-up steps $k$

\STATE Initialize $\theta \gets \theta_{\text{ref}}$
\vspace{1mm}

\WHILE{not converged}
    \vspace{1mm}
    \STATE // Generate samples using current policy
    \STATE Sample $x_0 \sim p_0$, context $c \sim p(c)$
    \STATE Solve ODE: $x_1 = x_0 + \int_0^1 v_{\theta}(t,x_t,c)dt$
    \STATE Compute reward $r(x_1, c)$ using reward model
    \STATE Sample $t \sim \mathcal{U}[0,1]$, obtain $x_t=tx_1+(1-t)x_0, u_t=x_1-x_0$ (e.g., linear interpolation).
    \vspace{1mm}
    
    \STATE // ---------- CRITIC UPDATES ----------
    \vspace{1mm}
    \STATE \textbf{Stable Intermediate Value Evaluation:}
    \vspace{1mm}
    
    \STATE // 1. Reward Shaping for stability
    \STATE $\tilde{r} = \frac{r-\min(r)}{\max(r)-\min(r)+\epsilon}$
    \vspace{1mm}
    
    \STATE // 2. Value Function Updates
    \STATE \textbf{Critic Loss:} $\mathcal{L}_{\text{critic}} = (V_\phi(x_t,t,c) - \tilde{r})^2$
    \STATE $\phi \gets \phi - \eta_c\nabla_\phi\mathcal{L}_{\text{critic}}$
    \vspace{2mm}
    
    \STATE // ---------- ACTOR UPDATES ----------
    \vspace{1mm}
    \STATE \textbf{Robust Actor-Critic Framework:}
    \vspace{1mm}
    
    \IF{steps $\leq k$}
        \STATE // 3. Critic Warm-Up Phase
        \STATE $\mu_{\mathcal{B}} = \text{mean}(\tilde{r})$, $\sigma_{\mathcal{B}} = \text{std}(\tilde{r})$
        \STATE $A = \frac{\tilde{r}-\mu_{\mathcal{B}}}{\sigma_{\mathcal{B}}+\epsilon}$ // GRAE
    \ELSE
        \STATE // Critic-based advantage estimation
        \STATE $A = \tilde{r} - V_\phi(x_t,t,c)$
    \ENDIF
    \vspace{1mm}
    
    \STATE // 4. Advantage Clipping for stability
    \STATE $A_{\text{clip}} = \text{clip}(A, -\delta, \delta)$
    \vspace{2mm}
    
    \STATE \textbf{Generalized Critic Weighting:}
    \vspace{1mm}
    
    \STATE // Calculate weight based on advantage
    \STATE $w = \exp(\tau \cdot A_{\text{clip}})$
    \vspace{2mm}
    
    \STATE \textbf{Wasserstein Regularization:}
    \vspace{1mm}
    
    \STATE // Compute W2 distance approximation
    \STATE $\Omega = \|v_\theta(t,x_t,c) - v_{\theta_{\text{ref}}}(t,x_t,c)\|^2$
    \vspace{1mm}
    
    \STATE \textbf{Actor Loss:} $\mathcal{L}_{\text{actor}} = w \cdot \|v_\theta(t,x_t,c) - u_t(x_t|x_1,c)\|^2 + \alpha \cdot \Omega$
    \STATE $\theta \gets \theta - \eta_a\nabla_\theta\mathcal{L}_{\text{actor}}$
    \vspace{1mm}
\ENDWHILE
\vspace{1mm}

\RETURN Optimized model parameters $\theta$
\end{algorithmic}
\end{algorithm}

\clearpage

\section{Additional Experimental Results}
\label{app: Additional Experimental Results}
\subsection{Ablation  Study of Proposed Stabilization Techniques}

Our empirical analysis reveals that naive direct value regression, without stabilization mechanisms, exhibits severe actor-critic loss instability during early training phases. This instability manifests not only in the loss landscape but also in the reward dynamics. Specifically, when an insufficiently trained critic is used to compute advantages and guide policy updates, it triggers a destructive feedback loop that ultimately leads to training collapse \citep{offline_to_online}. As illustrated in Fig. \ref{fig: reward curve of case study}, the reward curve for the prompt ``A dog in the sky'' demonstrates how the baseline model's performance steadily deteriorates over time. This example vividly illustrates the limitations of vanilla online actor-critic methods in flow matching - \textbf{without proper stabilization techniques}, the framework \textbf{fails to maintain stable continual improvement}, resulting in catastrophic performance degradation. The contrast between the baseline's declining rewards and our stabilized approach's steady improvement underscores the critical importance of our proposed stabilization mechanisms.

\begin{figure*}[!ht]
	\centering
 \includegraphics[width=0.5\linewidth]{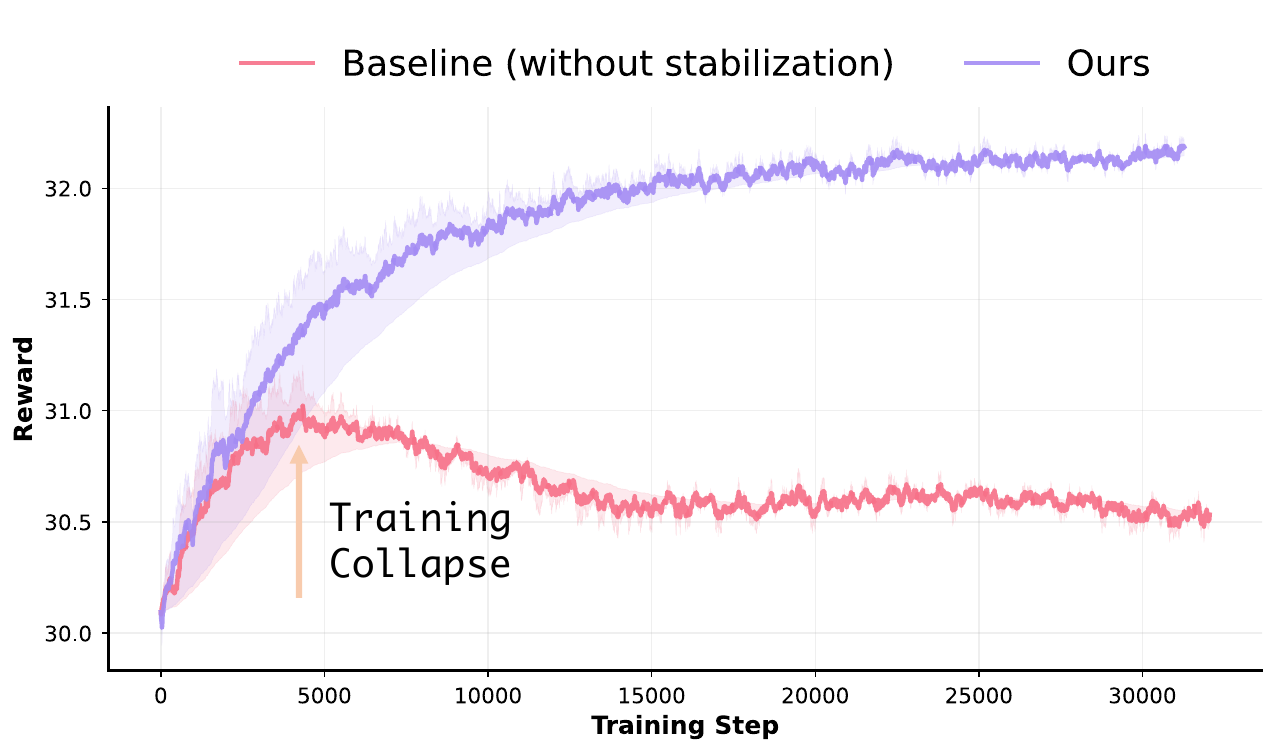}
	\caption{Reward Curve in Case Study of Proposed Stabilization Techniques.}
    \label{fig: reward curve of case study}
\end{figure*}

\subsection{Computational Cost}

\begin{table}[!ht]
\caption{Computational Cost of Different Methods.}
\label{tab:com_time}
 \begin{center}
     \resizebox{0.8\textwidth}{!}{
\begin{tabular}{ccc}
\toprule
\textbf{Algorithm} & \textbf{Run Time (Hours)} & \textbf{Peak GPU Memory (GB)} \\\\
\midrule
AC-Flow & 30.76 & 42 \\
AC-Flow (w/o W2) & 25.79 & 31 \\
RWR & 28.51 & 40 \\
AWR+W2 & 28.72 & 40 \\
RAFT & 24.42 & 29 \\
ReFT & 24.63 & 29 \\
\bottomrule
\end{tabular}
}
 \end{center}
\end{table}

Based on Tab. \ref{tab:main_results} and \ref{tab:com_time}, our experimental analysis shows that while AC-Flow achieves state-of-the-art performance across multiple metrics, it does require slightly higher computational resources compared to simpler baseline methods. As presented in Table \ref{tab:com_time}, AC-Flow requires approximately 30.76 hours of training time on a single NVIDIA RTX A6000 GPU with a peak memory usage of 42GB. This increased resource requirement is primarily attributed to the additional computations needed for our actor-critic framework with intermediate state evaluation.

The ablated version without Wasserstein regularization (AC-Flow w/o W2) demonstrates a notable reduction in computational demands, requiring 25.79 hours and 31 GB of peak memory, highlighting the computational cost of maintaining generative diversity. When compared to other fine-tuning approaches, our method shows comparable efficiency to reward-weighted methods like RWR (28.51 hours, 40GB) and AWR+W2 (28.72 hours, 40 GB ), while requiring moderately more resources than simpler methods such as RAFT (24.42 hours, 29GB) and ReFT (24.63 hours, 29GB).

This computational profile represents a reasonable trade-off between performance gains and resource requirements, with AC-Flow delivering superior semantic alignment and human preference scores while maintaining acceptable training efficiency for research environments equipped with modern GPUs.

\subsection{Additional Alignment Results}

\begin{figure*}[!ht]
	\centering
 \includegraphics[width=\linewidth]{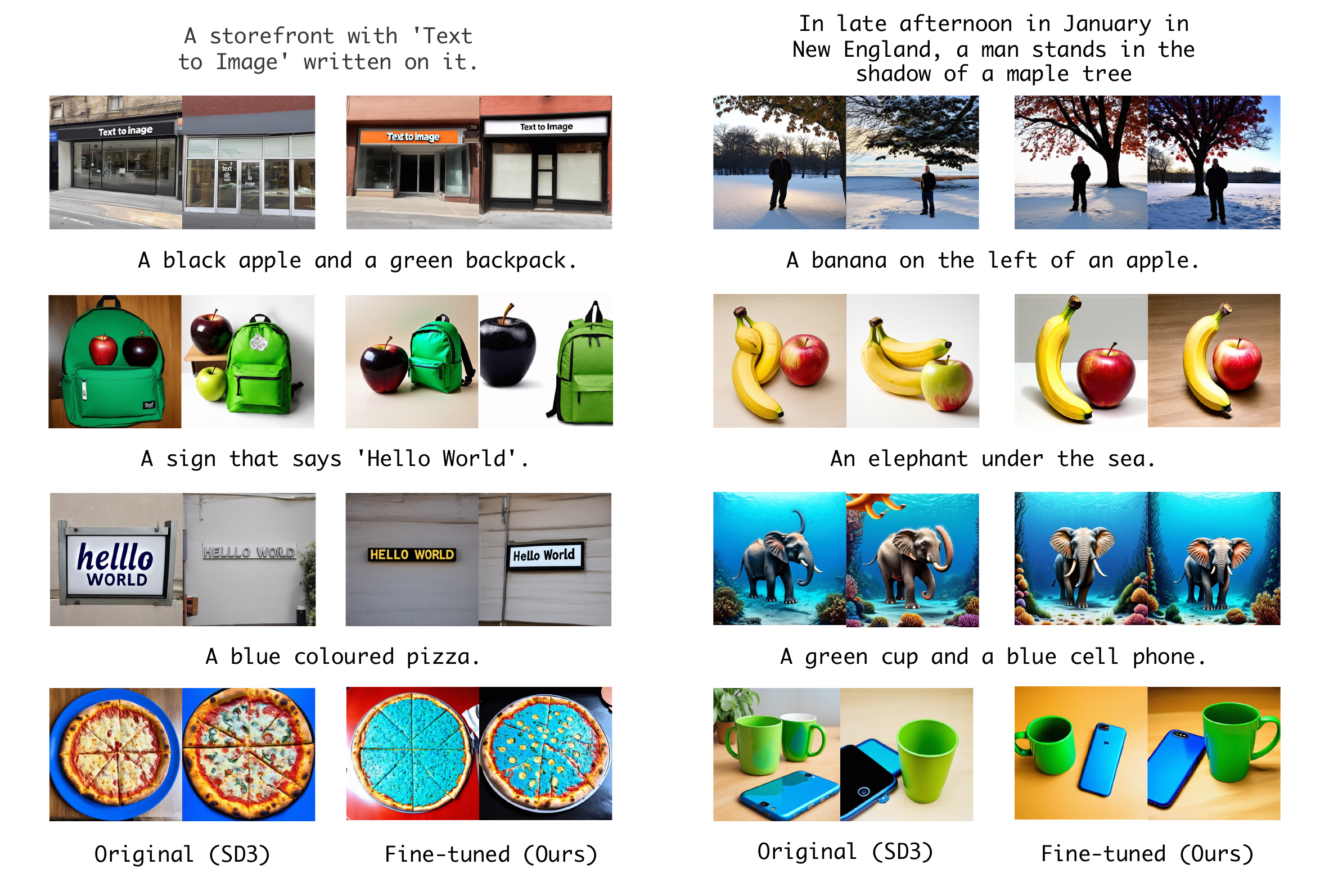}
	\caption{Complex Semantic Alignment Results. Comparison between original base model SD3 (left column pairs) and our fine-tuned model (right column pairs) on challenging text-to-image generation tasks. Our method demonstrates superior performance in handling spatial relationships (banana-apple positioning), attribute binding (black apple, green backpack), environmental context (New England winter scene), and unconventional concepts (blue pizza, underwater elephant) while maintaining visual quality and semantic coherence. Each test case evaluates different aspects of semantic understanding and compositional generation capabilities.}
    \label{fig: main complex semantic alignment}
\end{figure*}

Here, we evaluate our method's ability to handle challenging text-to-image generation tasks that require precise semantic alignment and spatial relationships. As shown in Figure \ref{fig: main complex semantic alignment}, our fine-tuned model demonstrates superior performance across diverse prompts involving complex spatial directives (``banana on the left of an apple''), attribute binding (``black apple and green backpack''), and contextual understanding (``late afternoon in January in New England''). The results highlight two key strengths of our approach: First, our actor-critic framework with intermediate state evaluation enables more precise control over the generation process, allowing the model to accurately capture spatial relationships and attribute bindings while maintaining visual quality. Second, the combination of advantage estimation and Wasserstein regularization helps prevent semantic drift and attribute mixing, even in challenging scenarios like ``blue colored pizza'' and ``elephant under the sea'' where the model must balance semantic alignment with visual plausibility. These qualitative improvements align with our quantitative results in Table \ref{tab:main_results}, demonstrating that our method achieves better semantic alignment without sacrificing generation diversity.

\end{document}